\newcommand{\Stay}{\texttt{Stay}}
\newcommand{\Move}{\texttt{Move}}
\newcommand{\E}{\mathbb{E}}
\newcommand{\reals}{\mathbb{R}}
\newcommand{\Prob}{\mathbb{P}}
\newcommand{\Ac}{\mathcal{A}}
\newcommand{\Sc}{\mathcal{S}}
\newcommand{\Qc}{\mathcal{Q}}
\newcommand{\Xc}{\mathcal{X}}
\newcommand{\N}{\mathbb{N}}
\DeclareMathOperator*{\argmin}{argmin}
\DeclareMathOperator*{\argmax}{argmax}
\def\ind{\mathbbm{1}}
\newtheorem{thm}{Theorem}
\newtheorem{lemma}[thm]{Lemma}
\newtheorem{example}{Example}
\newtheorem{remark}{Remark}
\newtheorem{defn}{Definition}
\title{Approximation Benefits of Policy Gradient Methods \\with Aggregated States}
\date{}
\author{%
  Daniel J. Russo \\
  Columbia University\\
  New York, NY 10027 \\
  \texttt{djr2174@gsb.columbia.edu} \\
}
\begin{document}
\maketitle

\begin{abstract}
	Folklore suggests that policy gradient can be more robust to misspecification than its relative, approximate policy iteration. This paper studies the case of state-aggregated representations,  where the state space is partitioned and either the policy or value function approximation is held constant over partitions. This paper shows a policy gradient method converges to a policy whose regret per-period is bounded by $\epsilon$, the largest difference between two elements of the state-action value function belonging to a common partition. With the same representation, both approximate policy iteration and approximate value iteration can produce policies whose per-period regret scales as $\epsilon/(1-\gamma)$, where $\gamma$ is a discount factor. Faced with inherent approximation error, methods that locally optimize the true decision-objective can be far more robust.
	
\end{abstract}

\section{Introduction}
The fields of approximate dynamic programming and reinforcement learning offer algorithms that can produce effective performance in complex control problems where large state spaces render exact computation intractable. Two of the classic methods in this area, approximate value iteration (AVI) and approximate policy iteration (API), modify value and policy iteration by fitting parametric approximation to the value function. In recent years, an alternative class of algorithms known as policy gradient methods has surged in popularity. These algorithms search directly over a parameterized subclass of policies by applying variants of gradient ascent to an objective function measuring the total expected reward accrued.  An appealing feature of these methods is that, even though parametric approximations may introduce unavoidable error, they still directly optimize the true decision objective. AVI and API, by contrast, pick parameters by minimizing a measure of error in the value function approximation that may not be well aligned with the decision objective. 

Numerous papers have found empirically that direct policy search eventually converges to superior policies. For example, consider a long sequence of works that applied approximate dynamic programming techniques to Tetris\footnote{See \cite{gabillon2013approximate} for a full account of the history.}.   After \cite{bertsekas1996temporal}  applied approximate policy iteration to the problem, 
 \cite{kakade2002natural}, \cite{szita2006learning},  and \cite{furmston2012unifying} attained much higher scores using methods that directly search over a class of policies. This is not unique to tetris.  A similar phenomenon was observed in an ambulance redeployment problem by \cite{maxwell2013tuning} and a battery storage problem by \cite{scott2014least}. Experiments with deep reinforcement learning tend to be less transparent, but policy gradient methods are extremely popular \citep[see e.g.][]{schulman2015trust, schulman2017proximal}.


\cite{kakade2002natural, szita2006learning, furmston2012unifying, maxwell2013tuning} and \cite{scott2014least} all search over the class of policies that are induced by (soft) maximization with respect to some parameterized value function. In a sense, these methods tune the parameters of the value function approximation, but do so aiming to directly improve the total expected reward earned rather than to minimize a measure of prediction error. As a result, any gap in performance cannot be due to the approximation architecture and instead is caused by the procedure that sets the parameters. 




There is very limited theory formalizing this phenomenon. Several works provide broad performance guarantees for each type of algorithm. In the case of API, \cite{munos2003error, antos2008learning} and \cite{lazaric2012finite} build on the original analysis of \cite{bertsekas1996neuro}. An intellectual foundation for studying policy gradient methods was laid by \cite{kakade2002approximately}, who analyze a conservative policy iteration algorithm (CPI).  \cite{scherrer2014local} observed that guarantees similar to those for CPI could be provided for some idealized policy gradient methods and recently  \cite{agarwal2019optimality} developed approximation guarantees and convergence rates for a much broader class of policy gradient algorithms. There are few lower bounds, but comparing available upper bounds is suggestive. The results for incremental algorithms like CPI depend on a certain distribution shift term that is typically smaller than the so-called concentrability coefficients in \cite{munos2003error, antos2008learning, lazaric2012finite}. See \cite{scherrer2014approximate}. 

This paper provides a specialized study of algorithms that use state-aggregated representations, under which the state space is partitioned and either the policy or value function approximation does not distinguish between states in a common partition. State aggregation is a very old idea in approximate dynamic programming and reinforcement learning \citep{whitt1978approximations, bean1987aggregation, singh1995reinforcement, gordon1995stable, tsitsiklis1996feature,rust1997using, li2006towards, jiang2015abstraction, abel2016near}, leading to tractable algorithms for problems with low dimensional continuous state spaces where it is believed that nearby states are similar. We measure the inherent error of a state aggregation procedure by the largest difference between two elements of the state-action value function belonging to a common partition, denoted $\epsilon_{\phi}$. (Here $\phi$ denotes a particular state-aggregation.) 

We show that any policy that is a stationary point of the policy gradient objective function has per-period regret less than $\epsilon_{\phi}$. Many variants of policy gradient algorithms, being first-order methods, are ensured to converge (often efficiently) to stationary points, so this provides a guarantee on the quality of an ultimate policy produced with this approximation architecture. This guarantee is a substantial improvement over past work. The recent results of \cite{bhandari2019global} translate into limiting per-period regret of $\kappa_{\rho} \epsilon_{\phi} / (1-\gamma)$, where $\gamma$ is a discount factor and $\kappa_{\rho}$ is a complex term that captures distribution shift. Critically, here even per-period regret scales with the effective horizon. Other available bounds \citep{kakade2002approximately, scherrer2014local, agarwal2019optimality} are at least as bad\footnote{See Remark \ref{rem:updates_to_agrawal} for discussion of updated results in \cite{agarwal2019optimality}.}. Building on an example of \cite{bertsekas1996neuro}, we give an example in which API produces policies whose per-period regret scales as $\epsilon_{\phi} / (1-\gamma)$, hence establishing formally that policy gradient methods converge to a far better policy with the same approximation architecture. 

The large performance gap between API and policy gradient is surprising since they are known to be closely related \citep{konda2000actor, sutton2000policy}. There is a particularly precise and simple connection in the case of state-aggregation: Theorem \ref{thm:pg-is-api} shows that a Frank-Wolfe \citep{frank1956algorithm} policy gradient method is equivalent to a version of a API which (1) estimates an approximate value function by minimizing a loss function which weighs errors at states in proportion to how often those states are visited under the current policy and (2) makes soft or local updates to the policy in each iteration. With these modifications, each iteration of API locally optimizes a first-order approximation to the true decision objective, leading to much greater robustness to approximation errors. See Section \ref{sec:actor_critic}. Section \ref{sec: api} also studies a version of API that makes only the first change --- using an on-policy state weighting --- but not the second. A different counterexample (Ex.~\ref{counterexample2}) is constructed to show that this variant can be as brittle as a standard version of API which makes neither change.

The limited scope of this work should be highlighted. I have in mind settings where policy gradient is applied in simulation, which is how they are currently employed in nearly all applications. In this case, it is feasible to employ an exploratory initial distribution as is assumed in this work. This choice can have a critical impact on the optimization landscape \cite[see e.g.][]{agarwal2019optimality}. By focusing on the quality of stationary points, the paper sidesteps issues of (i) optimization error (i.e. error due to executing only a finite number of gradient steps), (ii) statistical error (i.e. error due to estimating gradients with limited simulated rollouts) and (iii) precisely which policy gradient variant and stepsizes are employed. The treatment of API is similarly stylized. This choice allows for a crisp presentation focused on one insight that is missing in the current literature.

\paragraph{Further discussion of related literature.} 
\cite{dong2019provably} prove that a state-aggregated and optimistic variant of $Q$-learning efficiently approaches limiting per-period regret smaller a measure of inherent aggregation error similar to $\epsilon_{\phi}$. \cite{van2006performance} had previously shown that approximate value iteration with fixed state-relevance weights could suffer per-period regret that scales with the effective time horizon. \cite{van2006performance} also observed that the robustness of policies derived from solutions of state-aggregated Bellman equations can depend critically on the choice of state-relevance weights. While this paper studies different algorithms and gives proofs which bear little resemblance to \cite{van2006performance} and \cite{dong2019provably},  their study of the robustness difference between to closely related algorithms  inspired my own. 

A number of recent works study the convergence rates of policy gradient methods in Markov decision processes with finite state and action spaces under the assumption that the policy class can represent all stochastic policies. Examples include \citep{agarwal2019optimality, mei2020global, zhang2021sample, cen2021fast, bhandari2021linear, khodadadian2021linear}. The fastest convergence rates seem to be attained by policy gradient variants that behave just like policy iteration \citep{bhandari2021linear}. Such theory therefore does not offer insight into the advantages of policy gradient methods over classical algorithms. This paper helps closes that. intellectual gap in the literature.

A fascinating paper by \cite{scherrer2012use} observed that the horizon dependence of approximate value and policy iteration can be improved by modifying them to use non-stationary policies. It is unclear if there is a connection between their work and this paper's finding about policy gradient methods for optimizing over the class of stationary policies. 

What is called state-aggregation in this paper is sometimes called `hard' state aggregation. Generalizations allow a state to have an affinity or partial association with several different regions of the state space. \cite{singh1995reinforcement} propose a soft state aggregation method, which avoids discontinuities in the approximate value function at the boundaries of partitions. The theory of \cite{tsitsiklis1996feature} holds for a related class of approximations they call interpolators. A popular approximation method called tile coding, which is closely related to state-aggregation, is discussed in the textbook of \cite{sutton2018reinforcement}. This paper does not pursue such extensions, focusing on the simplest variant of state-aggregation. The current proof significantly relies on the structure of hard state-aggregation.

 The current paper assumes a state-aggregation rule is fixed and given, then studying the quality of the policies various algorithms produce with this approximation.  A long series of works have focused on how a state aggregation can be learned or adaptively constructed  \citep{bertsekas1989adaptive, singh1995reinforcement, dean1997model, jiang2015abstraction, duan2019state, misra2020kinematic}. 
  
The bounds for policy gradient provided in this work depend on a notion of maximal error due to state aggregation (See Definition \ref{def:error}). Contemporaneously, \cite{agarwal2020pc} looked at policy gradient with state aggregated representations. Dependence on the time horizon is not a focus of their work. Instead, they focus on providing an upper bound that depends on a softer notion of approximation error which averages across partitions. It seems that the two measures coincide in Example \ref{counterexample} below, but the measure of \cite{agarwal2020pc} can offer an important improvement in other examples. Their upper bounds are not matched by those previously established for alternative algorithms and are suggestive of another provable benefit of using policy gradient methods in this setting.

\section{Problem Formulation}\label{sec: formulation}
 We consider a Markov decision process $M = (\Sc, \Ac, r, P, \gamma, \rho)$, which consists of a finite state space $\Sc=\{1,\cdots, |\Sc|\}$, finite action space $\Ac=\{1, \cdots, |\Ac| \}$, reward function $r$, transition kernel $P$, discount factor $\gamma \in (0,1)$ and initial distribution $\rho$. For any finite set $\Xc=\{1,\cdots, |\Xc|\}$, we let $\Delta(\Xc) = \{ d\in \reals^{|\Xc|}_{+} : \sum_{x\in \Xc} d_x =1  \}$ denote the set of probability distributions over $\Xc$. A stationary randomized policy is a mapping  $\pi: \Sc \to \Delta(\Ac)$. We use $\pi(s,a)$ to denote the $a^{\text{th} }$ component of $\pi(s)$. Let $\Pi$ denote the set of all stationary randomized policies. Conditioned on the history up to that point, an agent who selects action $a$ in state $s\in \Sc$ earns a reward in that period with expected value $r(s,a)$ and transitions randomly to a next state, where $P(s' | s, a)$ denotes the probability of transitioning to state $s'\in \Sc$. 
To treat randomized policies, we overload notation, defining for $d \in \Delta(\Ac)$, $r(s,d)= \sum_{a=1}^{|\Ac|} r(s,a)d_a$ and $P(s'|s, d)=\sum_{a=1}^{|\Ac|}  P(s'| s, a)d_a$. Notice that if $e_a\in \Delta(\Ac)$ is the $a^{\text{th}}$ standard basis vector, then $r(s,e_a)=r(s,a)$. 

\paragraph{Value functions and Bellman operators.}
We define, respectively, the value function associated with a policy $\pi$ and the optimal value function by 
\[ 
V_{\pi}(s) = \E_s^{\pi}\left[  \sum_{t=0}^{\infty} \gamma^t r(s_t, a_t ) \right], \qquad V^*(s) = \max_{\pi \in \Pi}  V_{\pi}(s).
\]
The notation $\E_s^{\pi}[\cdot]$ denotes expectations taken over the sequence of states when $s_0=s$ and policy $\pi$ is applied. A policy $\pi^*$ is said to be optimal if $V_{\pi^*}(s) = V^*(s)$ for every $s\in \Sc$. It is known that  an optimal deterministic policy exists. Throughout this paper, I will use $\pi^*$ to denote some optimal policy. There could be multiple, but this does not change the results. The Bellman operator $T_{\pi} : \reals^n \to \reals^n$ associated with a policy $\pi \in \Pi$ maps a value function $V\in \mathbb{R}^n$ to a new value function $T_{\pi} V \in \reals^n$ defined by 
$\left(T_{\pi} V\right)(s) = r(s,\pi(s)) + \gamma \sum_{s' \in \Sc} P(s' | s, \pi(s)) V(s')$.
The Bellman optimality operator $T: \reals^n \to \reals^n$ is defined by 
\[
TV(s) = \max_{\pi \in \Pi} \left(T_{\pi} V\right) (s) = \max_{d \in \Delta(\Ac) } r(s,d) + \gamma \sum_{s' \in \Sc} P(s' | s, d) V(s').
\]
It is well known that $T$ and $T_{\pi}$ are contraction mappings with respect the maximum norm. Their unique fixed points are $V^*$ and $V_{\pi}$, respectively. For a state $s\in \Sc$, policy $\pi \in \Pi$ and action distribution $d\in \Delta(\Ac)$, define the state-action value function $Q_{\pi}(s,d) = r(s,d) + \gamma \sum_{s' \in \Sc} P(s' | s, d) V_{\pi}(s),$ which measures the expected total discounted reward of sampling an action from $d$ in state $s$ and applying $\pi$ thereafter. When $d$ is deterministic, meaning $d_a = 1$ for some $a\in \Ac$, we denote this simply by $Q_{\pi}(s,a)$. Define $Q^*(s,d)=Q_{\pi^*}(s,d)$ for some optimal policy $\pi^*$. These obey the relations, 
\begin{equation}\label{eq: Q to bellman}
Q_{\pi}(s, \pi'(s) ) = \left( T_{\pi'} V_{\pi}\right)(s)  \qquad    \max_{d\in \Delta(\Ac) }  Q_{\pi}(s, d ) = \left( T V_{\pi} \right)(s).
\end{equation}

\paragraph{Geometric average rewards and occupancies.}
Policy gradient methods are first order optimization algorithms applied to optimize a scalar objective that measures expected discounted reward earned from a random initial state, given by
\begin{equation}\label{eq:objective_J}
J(\pi) = (1-\gamma) \sum_{s \in \Sc} \rho(s) V_{\pi}(s).
\end{equation}
Another critical object is the discounted state occupancy measure 
\[ 
\eta_{\pi} = (1-\gamma)\sum_{t=0}^{\infty}  \gamma^t \rho  P_{\pi}^t \in \Delta(\Sc),
\]
where $P_{\pi}\in \reals^{|\Sc|\times |\Sc|}$ is the Markov transition matrix under policy $\pi$ and $\rho \in \mathbb{R}^{|\Sc|}$ is viewed as a row vector. Here $\eta_{\pi}(s)$ gives the geometric average time spent in state $s$ when the initial state is drawn from $\rho$. These two are related, as $J(\pi) = \sum_{s \in \Sc} \eta_{\pi}(s) r(s, \pi(s))$. 

The factor of $(1-\gamma)$ in the definitions of $\eta_{\pi}$ and $J(\pi)$ serves to normalize these quantities and gives them a natural interpretation in terms of average reward problems. In particular, consider, just for the moment, a problem with modified transition probabilities $\tilde{P}(s'| s, a) = (1-\gamma)\rho(s')+ \gamma P(s' | s, a)$. That is, in each period there is a $1-\gamma$ chance that the system resets in a random state drawn from $\rho$. Otherwise, the problem continues with next state drawn according to $P$. One can show that  $J(\pi)$ denotes the average reward earned by $\pi$ and $\eta_{\pi}(s)$ is average fraction of time spent in state $s$ under policy $\pi$ in this episodic problem. Undiscounted average reward problems are often constructed by studying $J(\pi)$ as the discount factor approaches one \citep{bertsekas1995dynamic, puterman2014}.

\section{State aggregation }\label{sec: state agg}

A state aggregation is defined by a function $\phi: \Sc \to \{1, \cdots, m \}$ that partitions the state space into $m$ segments. We call $\phi^{-1}(j) = \{ s \in \Sc : \phi(s) = j\}$ the $j$--th segment of the partition. Typically we have in mind problems where the state space is enormous (effectively infinite) but it is tractable to store and loop over vectors of length $m$. Tractable algorithms can then be derived by searching over approximate transition kernels, value functions, or policies, that don't distinguish between states belonging to a common segment. Our hope is that states in a common segment are sufficiently similar, for example due to smoothness in the transition dynamics and rewards, so that these approximations still allow for effective decision-making.

To make this idea formal, let us define the set of approximate value functions and policies induced by a state aggregation $\phi$,
\begin{align*}
\Qc_{\phi}&= \{ Q\in \reals^{|\Sc| \times |\Ac|  }  :   Q(s,a) = Q(s', a) \,\, \text{for all } a\in \Ac, \text{ and all } s,s'\in\Sc \,\, \text{such that }  \phi(s)=\phi(s') \}\\
\Pi_{\phi} &= \{ \pi \in \Pi :  \pi(s) = \pi(s')  \, \text{for all } s,s'\in\Sc \,\, \text{such that }  \phi(s)=\phi(s') \}. 
\end{align*}
It should be emphasized that practical algorithms do not require, for example, actually storing $n\cdot |\Ac|$ numbers in order to represent an element $Q\in \Qc_{\phi} \subset \reals^{|\Sc| \cdot |\Ac|}$. Instead, one stores just $m\cdot |\Ac|$ numbers, one per segment. 

Should we approximate the value function or the policy? In this setting, there is a broad equivalence. This is important to the interpretation of the paper's results, since it implies that any benefit of policy gradient methods is not due to its approximation architecture but instead due to how it searches over the parameters of an approximation. A complete proof of these statements is omitted but some details are given Appendix \ref{sec:policy_value_equivalence}.
\begin{remark}[Equivalence of aggregated-state approximations]\label{rem:equivalence_of_representations}
	The set of randomized state-aggregated policies $\Pi_{\phi}$ is equal to the set of policies formed by softmax optimization with respect to state-aggregated value functions: 
	\begin{equation}\label{eq:state-agg_policies_and_softmax}
	\Pi_{\phi} = {\rm closure}\{ \pi \in \Pi : Q\in \Qc_{\phi},\,\, \pi(s,a)  = e^{Q(s, a)}/ \sum_{a'\in \Ac} e^{Q(s, a')} \,\, \forall s\in \Sc, a\in \Ac   \}. 
	\end{equation}
	
	Moreover, the set of deterministic policies contained in $\Pi_{\phi}$ is isomorphic to 
	\begin{equation}\label{eq:deterministic_policies}
	\{ f\in \Ac^{|\Sc|} : Q \in \Qc_{\phi} ,  f(s) = \min\{\argmax_{a\in \Ac} Q(s,a)  \} \,  \},
	\end{equation}
	the set of greedy policies\footnote{	Here we have broken ties deterministically in favor of the actions with a smaller index. If there are multiple optimal actions and ties are broken differently at states sharing common segment, the induced policy would not be constant across segments. } with respect to some state-aggregated value function. 
\end{remark}


\section{Approximation via policy gradient with state-aggregated policy classes}\label{sec:upper_bounds}
\paragraph{Convergence to stationary points.}

Policy gradient methods are first-order optimization methods applied to maximize $J(\pi)$ over the constrained policy class $\Pi_{\phi}$. Of course, just as there is an ever growing list of first-order optimization procedures, there are many policy gradient variants. How do we provide insights relevant to this whole family of algorithms? Were $J(\pi)$ concave, we would expect that sensible optimization method converge to the solution of $\max_{\pi \in \Pi_{\phi}} J(\pi)$, allowing us to abstract away the details of the optimization procedure and study instead the quality of decisions that can be made using a certain constrained policy class. Unfortunately, $J(\pi)$ is non-concave \citep{bhandari2019global,agarwal2019optimality}. It is, however, smooth (see Lemma~\ref{lem: smoothness}). In smooth non-concave optimization, we expect sensible first-order methods to converge to a first-order stationary point \citep{bertsekas1997nonlinear}. Studying the quality of policies that are stationary points of $J(\cdot)$ then gives broad insight into how the use of  restricted policy classes affects the limiting performance reached by policy gradient methods.

As defined below, a policy is a first-order stationary point if, based on a first-order approximation to $J(\cdot)$, there is no feasible direction that improves the objective value. Local search algorithms generally continue to increase the objective value until reaching a stationary point. A first order stationary point could be a local maximum or a saddle point. The latter presents no additional complications for applying Theorem \ref{thm: main result} below.  Throughout this section, we view each $\pi \in \Pi$ as a stacked vector $\pi = (\pi(s,a) : s \in \Sc, a \in \Ac ) \in \mathbb{R}^{|\Sc| \cdot |\Ac| }$. It may also be natural to view $\pi$ as an $|\Sc|\times |\Ac|$ dimensional matrix whose rows are probability vectors. In that case, all results are equivalent if one views inner products as the standard inner product on square matrices, given by $\langle A, B \rangle={\rm Trace}(A^{\top} B)$, and all norms as the Frobenius norm. 
\begin{defn}
	A policy $\pi \in \Pi$ is a first order stationary point of $J: \Pi \to \mathbb{R}$ on the subset $\Pi_{\phi}\subset \Pi$ if
	\[
	\langle \nabla J(\pi) \, , \, \pi' - \pi \rangle \leq 0  \qquad \forall \pi' \in \Pi_{\phi}.  
	\]
\end{defn}
The following smoothness result is shown by a short calculation\footnote{In Arxiv version 2, this is Lemma E.3. To translate their result to our formulation, one must multiply the statement in Lemma E.3 by $(1-\gamma)$, as in the definition $J(\pi)=(1-\gamma) \rho V_{\pi}$. They also have normalized so that $|r(s,a)|\leq 1$. That is the reason $\|r\|_{\infty}$ does not appear in their expression.} in \cite{agarwal2019optimality}. 

\begin{lemma}
	\label{lem: smoothness} 
	For every $\pi, \pi' \in \Pi$, $\| \nabla J(\pi) - \nabla J(\pi')\|_{2} \leq  L \|  \pi - \pi' \|_{2}$ where $L=\frac{2 \gamma |\Ac| \|r\|_{\infty} }{(1-\gamma)^2 }.$ 
\end{lemma}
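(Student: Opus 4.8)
The plan is to bound the $\ell_2 \to \ell_2$ operator norm of the Hessian $\nabla^2 J(\pi)$ uniformly over $\pi \in \Pi$, which immediately yields the claimed Lipschitz bound on the gradient. Concretely, for $\pi, \pi' \in \Pi$ the fundamental theorem of calculus gives $\nabla J(\pi') - \nabla J(\pi) = \int_0^1 \nabla^2 J(\pi_\alpha)(\pi'-\pi)\,d\alpha$ along the segment $\pi_\alpha = \pi + \alpha(\pi'-\pi)$, so that $\|\nabla J(\pi') - \nabla J(\pi)\|_2 \le \sup_{\alpha \in [0,1]}\|\nabla^2 J(\pi_\alpha)\|_{\rm op}\,\|\pi'-\pi\|_2$; since $\Pi$ is convex, every $\pi_\alpha$ lies in $\Pi$. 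Because the Hessian is symmetric, $\|\nabla^2 J(\bar\pi)\|_{\rm op} = \sup_{\|u\|_2 = 1} |\langle u, \nabla^2 J(\bar\pi) u\rangle| = \sup_{\|u\|_2=1} |h''(0)|$, where $h(t) = J(\bar\pi + t u)$. Thus it suffices to show $|h''(0)| \le L$ for every $\bar\pi \in \Pi$ and every unit-Frobenius-norm direction $u$.

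To compute $h''(0)$ I would write the value function in closed form as $V_\pi = (I - \gamma P_\pi)^{-1} r_\pi$, where $r_\pi(s) = r(s,\pi(s))$ and $P_\pi$ are both \emph{linear} in $\pi$. Writing $\pi_t = \bar\pi + t u$, we have $r_{\pi_t} = r_{\bar\pi} + t\, r_u$ and $P_{\pi_t} = P_{\bar\pi} + t\, P_u$, where $r_u(s) = \sum_a u(s,a) r(s,a)$ and $P_u(s,s') = \sum_a u(s,a) P(s'|s,a)$ are the constant first derivatives. Setting $M(t) = (I - \gamma P_{\pi_t})^{-1}$ and using the matrix-inverse derivative identity $M' = \gamma M P_u M$, two differentiations give $V''_{\pi_t} = 2\gamma^2 M P_u M P_u M \,r_{\pi_t} + 2\gamma M P_u M \,r_u$, and hence $h''(0) = (1-\gamma)\,\rho\, V''_{\pi_0}$ evaluated with $M = (I-\gamma P_{\bar\pi})^{-1}$ and $r_{\pi_0} = r_{\bar\pi}$.

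It then remains to bound this expression in the maximum norm, using $|\rho w| \le \|w\|_\infty$ since $\rho \in \Delta(\Sc)$. The two routine ingredients are $\|(I - \gamma P_{\bar\pi})^{-1}\|_\infty \le 1/(1-\gamma)$, which follows from the Neumann series since $P_{\bar\pi}$ is row-stochastic for $\bar\pi \in \Pi$, and $\|r_{\bar\pi}\|_\infty \le \|r\|_\infty$, since each $r_{\bar\pi}(s)$ is a convex combination of rewards. The crux of the argument — and the step that produces the factor $|\Ac|$ — is controlling $P_u$ and $r_u$, which carry the perturbation $u$. For any $v$, $(P_u v)(s) = \sum_a u(s,a)\sum_{s'}P(s'|s,a)v(s')$, so $\|P_u v\|_\infty \le \|v\|_\infty \max_s \|u(s,\cdot)\|_1$, and similarly $\|r_u\|_\infty \le \|r\|_\infty \max_s \|u(s,\cdot)\|_1$. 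The main obstacle is that these naturally produce $\ell_1$ norms of the rows of $u$, whereas the target constant is stated in the Frobenius norm; I would resolve this with Cauchy--Schwarz, $\max_s \|u(s,\cdot)\|_1 \le \sqrt{|\Ac|}\,\max_s \|u(s,\cdot)\|_2 \le \sqrt{|\Ac|}\,\|u\|_2$. Each of the two terms of $V''$ contains two perturbation factors (either $P_u P_u$, or $P_u$ together with $r_u$), so each picks up $(\sqrt{|\Ac|})^2 = |\Ac|$; chaining the $\|M\|_\infty$ bounds then gives the first term $\le 2\gamma^2 |\Ac|\|r\|_\infty/(1-\gamma)^3$ and the second $\le 2\gamma |\Ac|\|r\|_\infty/(1-\gamma)^2$ for unit $u$. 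Multiplying by $(1-\gamma)$ and adding, the two contributions combine as $\frac{2\gamma|\Ac|\|r\|_\infty}{1-\gamma}\bigl(\frac{\gamma}{1-\gamma} + 1\bigr) = \frac{2\gamma|\Ac|\|r\|_\infty}{(1-\gamma)^2} = L$, as claimed.
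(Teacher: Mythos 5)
Your proposal is correct, and every step checks out: the second-derivative formula $V''_{\pi_t} = 2\gamma^2 M P_u M P_u M\, r_{\pi_t} + 2\gamma M P_u M\, r_u$ is right, the induced-$\infty$-norm bounds $\|M\|_\infty \le 1/(1-\gamma)$, $\|P_u\|_\infty \le \max_s\|u(s,\cdot)\|_1$, $\|r_u\|_\infty \le \|r\|_\infty \max_s\|u(s,\cdot)\|_1$ are valid on $\Pi$ (and you correctly note convexity of $\Pi$ so the segment stays where $P_{\pi_\alpha}$ is row-stochastic), the Cauchy--Schwarz conversion is exactly where the $|\Ac|$ factor must come from, and the final arithmetic recombines to precisely $L=\frac{2\gamma|\Ac|\|r\|_\infty}{(1-\gamma)^2}$. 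The only point of comparison worth noting is that the paper gives no proof of this lemma at all: it defers to a ``short calculation'' in \cite{agarwal2019optimality} (their Lemma E.3 in arXiv v2), with a footnote explaining that their statement must be multiplied by $(1-\gamma)$ and adjusted for their normalization $|r(s,a)|\le 1$. Your argument is essentially a self-contained reconstruction of that same calculation --- bounding the Hessian via second directional derivatives and resolvent identities --- with the added value that it verifies the exact constant directly under this paper's normalization ($J=(1-\gamma)\rho V_\pi$, explicit $\|r\|_\infty$), rather than relying on the translation described in the footnote.
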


Above, we've claimed that we expect first order methods applied to smooth nonconvex optimization are expected to converge to first-order stationary points. This is a standard subject in nonlinear optimization \citep[see e.g.][]{bertsekas1997nonlinear} and the recent  literature has proposed stochastic first order methods with fast convergence rates to stationary points \cite[see e.g.][]{ghadimi2013stochastic,xiao2014proximal, defazio2014saga, reddi2016stochastic, ghadimi2016accelerated,reddi2016proximal, reddi2016stochasticfrank,  davis2019proximally}. Below, as an illustration, we will show a convergence result for an \emph{idealized} policy gradient method, with exact gradient evaluations and a direct parameterization. Armed with such a result, we focus on the quality of stationary points.  

 Recall that a point $\pi_{\infty}$ is a limit point of a sequence if some subsequence converges to $\pi_{\infty}$. Bounded sequences have convergent subsequences, so limit points exist for the sequence $\{ \pi_t \}$ in Lemma \ref{lem: convergence to stationary}. The operator ${\rm Proj}_{2, \Pi_{\phi} }(\pi) = \argmin_{\pi' \in \Pi_{\phi}} \|\pi' - \pi\|_{2}^2$ denotes orthogonal projection onto the convex set $\Pi_{\phi}$. This exact lemma statement can be found in \cite{bhandari2019global} and similar statements appear in nonlinear optimization textbooks  \citep{beck2017first, bertsekas1997nonlinear}. 
\begin{lemma}[Convergence to stationary points]\label{lem: convergence to stationary}
	For any $\pi_1 \in \Pi$ and $\alpha \in \left(0, \frac{1}{L}\right]$, let
	\begin{equation}\label{eq:pgd} 
	\pi_{t+1} = {\rm Proj}_{2, \Pi_{\phi} }\left( \pi_{t} + \alpha \nabla J(\pi_t)\right) \qquad t=1,2,3\cdots
	\end{equation}
	If $\pi_{\infty}$ is a limit point of $\{ \pi_{t}: t\in \N \}$,  then $\pi_{\infty}$ is a stationary point of $J(\cdot)$ on $\Pi_{\phi}$ and $$\lim_{t\to \infty} J(\pi_t) = J(\pi_{\infty}).$$  
\end{lemma}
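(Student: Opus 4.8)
The plan is to run the standard three-part argument for projected gradient methods on a smooth objective: first establish that the objective increases monotonically along the iterates, then deduce that successive iterates become arbitrarily close, and finally pass to the limit in the projection recursion to identify every limit point as a stationary point.

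First I would record the two ingredients that drive everything. From Lemma~\ref{lem: smoothness}, $L$-smoothness of $J$ gives the quadratic lower bound
\[
J(\pi_{t+1}) \ge J(\pi_t) + \langle \nabla J(\pi_t),\, \pi_{t+1}-\pi_t\rangle - \tfrac{L}{2}\|\pi_{t+1}-\pi_t\|_2^2,
\]
while the variational characterization of the Euclidean projection onto the convex set $\Pi_{\phi}$, applied to the feasible point $\pi' = \pi_t$, gives
\[
\langle \nabla J(\pi_t),\, \pi_{t+1}-\pi_t\rangle \ge \tfrac{1}{\alpha}\|\pi_{t+1}-\pi_t\|_2^2.
\]
Combining these and using $\alpha \le 1/L$ produces the sufficient-increase inequality
\[
J(\pi_{t+1}) - J(\pi_t) \ge \Big(\tfrac{1}{\alpha}-\tfrac{L}{2}\Big)\|\pi_{t+1}-\pi_t\|_2^2 \ge \tfrac{L}{2}\|\pi_{t+1}-\pi_t\|_2^2 \ge 0.
\]

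Next, since $\Pi_{\phi}$ is compact and $J$ is continuous, $J$ is bounded above on $\Pi_{\phi}$; a nondecreasing bounded sequence converges, so $\{J(\pi_t)\}$ has a finite limit. Telescoping the sufficient-increase inequality then forces $\sum_t \|\pi_{t+1}-\pi_t\|_2^2 < \infty$, hence $\|\pi_{t+1}-\pi_t\|_2 \to 0$. Now let $\pi_{\infty}$ be a limit point, with $\pi_{t_k}\to\pi_{\infty}$ along a subsequence; because consecutive iterates vanish in distance, $\pi_{t_k+1}\to\pi_{\infty}$ as well. Using continuity of $\nabla J$ (from its Lipschitzness) and the nonexpansiveness, hence continuity, of the projection operator, I pass to the limit in $\pi_{t_k+1}={\rm Proj}_{2,\Pi_{\phi}}(\pi_{t_k}+\alpha\nabla J(\pi_{t_k}))$ to obtain the fixed-point identity $\pi_{\infty} = {\rm Proj}_{2,\Pi_{\phi}}(\pi_{\infty} + \alpha \nabla J(\pi_{\infty}))$. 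Applying the projection's variational inequality at this fixed point and dividing by $\alpha>0$ yields $\langle \nabla J(\pi_{\infty}),\, \pi'-\pi_{\infty}\rangle \le 0$ for all $\pi'\in\Pi_{\phi}$, which is exactly the defining condition of a stationary point. Finally, since the full sequence $\{J(\pi_t)\}$ converges and $J(\pi_{t_k})\to J(\pi_{\infty})$ by continuity, the common limit must equal $J(\pi_{\infty})$.

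I expect the only genuine subtlety to lie in the limiting step: one must verify that both $\pi_{t_k}$ and $\pi_{t_k+1}$ converge to the \emph{same} point $\pi_{\infty}$, which is precisely why $\|\pi_{t+1}-\pi_t\|_2 \to 0$ is needed rather than merely a bounded step, and that the composition of the continuous map $\pi \mapsto \pi + \alpha\nabla J(\pi)$ with the continuous projection commutes with taking limits. Everything else reduces to routine manipulation of the descent lemma and the projection inequality, together with compactness of $\Pi_{\phi}$ to guarantee boundedness of $J$ and existence of limit points.
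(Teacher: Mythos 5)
Your proof is correct, and it is essentially the argument the paper defers to: the paper gives no proof of this lemma, instead citing \cite{bhandari2019global} and the textbooks \citep{beck2017first, bertsekas1997nonlinear}, where exactly your sufficient-increase/telescoping/limit-point scheme (ascent lemma plus the projection's variational inequality, summability of $\|\pi_{t+1}-\pi_t\|_2^2$, then passing to the limit in the fixed-point identity) is the standard proof. One negligible detail: since the lemma only assumes $\pi_1 \in \Pi$ rather than $\pi_1 \in \Pi_{\phi}$, your variational inequality with the ``feasible point'' $\pi_t$ is only valid for $t \ge 2$, so monotonicity and the telescoping sum should start there --- this changes nothing downstream.
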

\begin{remark}[Steepest feasible ascent]\label{rem:prox_view_of_projection} It is well known \citep[see e.g.][]{beck2017first} that policies generated by \eqref{eq:pgd} above satisfy $	\pi^{t+1}  = \argmin_{\pi \in \Pi_{\phi}} \left(\pi^{t} + \langle \nabla J(\pi^{t})\, , \,  \pi-\pi^t  \rangle  + \frac{1}{2\alpha} \left\| \pi-\pi^t \right\|^2_2  \right)$. When the stepsize $\alpha$ is small, a projected gradient update essentially moves in the steepest feasible ascent direction until reaching a stationary point, from which there are no feasible ascent ascent.  	
\end{remark}
\begin{remark}[Practical implementation]
	 The appendix provides many extra details related to this algorithm. It explains that this projection can be computed using simple soft-thresholding operations and that the whole algorithm can be implemented efficiently while storing only a parameter $\theta \in \mathbb{R}^{m\cdot |\Ac|}$. This stores one value per state segment and action, rather than one per state. The appendix also shows how to generate unbiased stochastic gradients of $J(\cdot)$. The body of this paper will instead focus on the quality of the stationary points of $J(\cdot)$, abstracting away the specifics of which policy gradient method is used. 
\end{remark}

\paragraph{Quality of stationary points.}
We will measure the accuracy of a state-aggregation $\phi(\cdot)$ through the maximal difference between state-action values with states belonging to the same segment of the state space. This notion is weaker than alternatives that explicitly assume transition probabilities and rewards are uniformly close within segments --- usually by imposing a smoothness condition \citep[see e.g.][]{rust1996numerical}. But it is a stronger requirement than a the recent one in \cite{dong2019provably}, which only looks at the gap between state-action values under the optimal value function. The current proof requires bounding the aggregation error of some policy $\pi_{\infty}$ that is a stationary point of $J(\cdot)$, so it does not seem possible to give guarantees for policy gradient methods if we replace $Q_{\pi}$ with $Q^*$ in the definition below. At the same time, performance bounds that depend on $\epsilon_{\phi}$ can be quite conservative. See for instance Figures \ref{fig:simulation} and \ref{fig:simulation2}. It is an open question whether this definition can be relaxed in a meaningful way.  \cite{li2006towards} provides a comparison of different measures of the approximation error of a state-aggregated representation. 
\begin{defn}[Inherent state aggregation error]\label{def:error}
	Let $\epsilon_{\phi} \in \mathbb{R}$ be the smallest scalar satisfying 
	\[
	\, | Q_{\pi}(s, a) - Q_{\pi}(s', a) |\leq \epsilon_{\phi}
	\]
	for every $\pi \in \Pi_{\phi}$, $a\in \Ac$, and all $s, s'\in \Sc$ such that $\phi(s)=\phi(s')$. 
\end{defn}

Despite the non-concavity of $J(\cdot)$, one can give guarantees on the quality of its stationary points.  The next result does so under the requirement that each state-space segment has positive probability under the initial weighting $\rho$. Similar assumptions appear in \cite{bhandari2019global} and \cite{agarwal2019optimality} and they each discuss its necessity at some length. While the next result holds for $\rho$ that is nearly degenerate, it should be emphasized that the convergence rates of many policy gradient methods depend inversely on $\min_i \rho\left(\phi^{-1}(i)\right)$. An exploratory initial distribution is critical to these algorithm's practical success. Recall that $J(\cdot)$, as defined in \eqref{eq:objective_J}, is normalized so that it represents average rather than cumulative reward earned.   Recall also that $\pi^* \in \Pi$ denotes some optimal policy, which by definition satisfies $V_{\pi^*}(s)=V^*(s) \,\, \forall s\in \Sc$. Such a $\pi^*$ is also an unconstrained maximizer of the policy gradient objective $J(\cdot)$.
\begin{thm}[Quality of stationary points]\label{thm: main result} Suppose $\rho\left(\phi^{-1}(i)\right)>0$ for each $i \in \{1,\cdots, m\}$. If $\pi_{\infty}$ is a stationary point of $J(\cdot)$ on $\Pi_{\phi}$, then
	\[ 
	J(\pi^*) - J(\pi_{\infty}) \leq (1-\gamma) \|  V_{\pi_{\infty}} - V^* \|_{\infty}  \leq 2\epsilon_{\phi}.
	\]
\end{thm}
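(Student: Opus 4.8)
The plan is to prove the two inequalities separately; the first is essentially free, and the second—which carries the state-aggregation error—is where all the work lies. For the first inequality, note that since $\pi^*$ is an unconstrained maximizer we have $V_{\pi^*}=V^*\ge V_{\pi_\infty}$ pointwise, so directly from \eqref{eq:objective_J},
\[
J(\pi^*)-J(\pi_\infty) = (1-\gamma)\sum_{s\in\Sc}\rho(s)\big(V^*(s)-V_{\pi_\infty}(s)\big)\le (1-\gamma)\|V^*-V_{\pi_\infty}\|_\infty,
\]
using only that $\rho$ is a probability distribution and that each summand is nonnegative.

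For the second inequality I would first make the stationarity condition concrete via the policy gradient theorem, which gives $\partial J(\pi)/\partial\pi(s,a)=\eta_\pi(s)\,Q_\pi(s,a)$. Writing $\langle\nabla J(\pi_\infty),\pi'-\pi_\infty\rangle\le 0$ and grouping states by segment, I would introduce the occupancy-weighted aggregated value
\[
\bar Q^{(j)}(a) = \frac{1}{w_j}\sum_{s\in\phi^{-1}(j)}\eta_{\pi_\infty}(s)\,Q_{\pi_\infty}(s,a),\qquad w_j := \sum_{s\in\phi^{-1}(j)}\eta_{\pi_\infty}(s).
\]
Keeping only the $t=0$ term in the series defining $\eta_{\pi_\infty}$ shows $\eta_{\pi_\infty}\ge(1-\gamma)\rho$, so the hypothesis $\rho(\phi^{-1}(j))>0$ forces every $w_j>0$. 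Because the per-segment action distributions of a feasible $\pi'\in\Pi_\phi$ may be varied independently, the stationarity inequality decouples across segments and reduces to: for each $j$, $\langle\bar Q^{(j)},\,d-\pi_\infty(s,\cdot)\rangle\le 0$ for all $d\in\Delta(\Ac)$. That is, $\pi_\infty$ is \emph{greedy within each segment with respect to $\bar Q^{(j)}$}. This translation is the crux of the argument.

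Next I would bound the one-step residual $\delta(s):=(TV_{\pi_\infty})(s)-V_{\pi_\infty}(s)=\max_a Q_{\pi_\infty}(s,a)-\sum_a\pi_\infty(s,a)Q_{\pi_\infty}(s,a)$, where the two identities come from \eqref{eq: Q to bellman}. By Definition \ref{def:error}, within a segment the values $Q_{\pi_\infty}(s,a)$ differ by at most $\epsilon_\phi$, so the convex combination $\bar Q^{(j)}(a)$ obeys $|\bar Q^{(j)}(a)-Q_{\pi_\infty}(s,a)|\le\epsilon_\phi$ for every $s\in\phi^{-1}(j)$ and $a\in\Ac$. Chaining this with greediness: for any $a$ in the support of $\pi_\infty(s,\cdot)$ and $a^\star\in\argmax_{a'}Q_{\pi_\infty}(s,a')$,
\[
Q_{\pi_\infty}(s,a)\ge\bar Q^{(j)}(a)-\epsilon_\phi\ge\bar Q^{(j)}(a^\star)-\epsilon_\phi\ge Q_{\pi_\infty}(s,a^\star)-2\epsilon_\phi,
\]
so averaging over the support gives $\delta(s)\le 2\epsilon_\phi$ uniformly in $s$.

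Finally I would close with a contraction argument: decomposing $V^*-V_{\pi_\infty}=(TV^*-TV_{\pi_\infty})+(TV_{\pi_\infty}-T_{\pi_\infty}V_{\pi_\infty})$, the first term is bounded pointwise by $\gamma\|V^*-V_{\pi_\infty}\|_\infty$ since $T$ is a $\gamma$-contraction in $\|\cdot\|_\infty$, and the second equals $\delta$. Evaluating at the state maximizing $V^*-V_{\pi_\infty}$ yields $\|V^*-V_{\pi_\infty}\|_\infty\le\gamma\|V^*-V_{\pi_\infty}\|_\infty+\|\delta\|_\infty$, hence $(1-\gamma)\|V^*-V_{\pi_\infty}\|_\infty\le\|\delta\|_\infty\le2\epsilon_\phi$. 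The decisive point—and the source of the improvement over approximate policy iteration—is that stationarity bounds the residual $\delta$ by $2\epsilon_\phi$ with \emph{no} horizon factor, so the lone $1/(1-\gamma)$ from the contraction is cancelled exactly by the $(1-\gamma)$ normalization in $J$. I expect the main obstacle to be the translation step: showing that stationarity on $\Pi_\phi$ is equivalent to greediness with respect to the occupancy-averaged values, and that this averaging cannot inflate the within-segment error beyond $\epsilon_\phi$.
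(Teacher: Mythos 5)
Your proof is correct and follows essentially the same route as the paper's: stationarity plus the policy gradient theorem gives per-segment greediness of $\pi_{\infty}$ with respect to the occupancy-weighted conditional average of $Q_{\pi_{\infty}}$ (this is exactly Lemma \ref{lem: Bellman for stationary} combined with the decomposition across segments), two applications of the $\epsilon_{\phi}$ bound yield the pointwise residual bound $TV_{\pi_{\infty}} - V_{\pi_{\infty}} \preceq 2\epsilon_{\phi} e$, and a standard Bellman argument converts this into the stated inequality. The only cosmetic differences are that you chain the aggregation error through the averaged values $\bar{Q}^{(j)}$ (which also handles randomized stationary points a bit more explicitly), whereas the paper chains through an existentially chosen state $\tilde{s}$ in the segment, and that you finish with a one-step contraction identity rather than iterating $T$ using monotonicity and the constant-shift property---equivalent executions of the same steps.
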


 For purposes of comparison, let us provide an alternative result, which can be derived by specializing a result in \cite{bhandari2019global}. At the time when this paper was initially written and posted, the result below was the best available bound. See the remark below for a detailed comparison with contemporaneous statement by \cite{agarwal2019optimality}. See the conclusion for discussion around how the special structure of state-aggregation seems to drive the improvements in Theorem \ref{thm: main result}.
\begin{thm}[Earlier result by \cite{bhandari2019global}]\label{thm:old_result}
	If $\pi_{\infty}$ is a stationary point of $J(\cdot)$ on $\Pi_{\phi}$, then
	\[ 
	J(\pi^*) - J(\pi_{\infty})  \leq  \kappa_{\rho} \frac{\epsilon_{\phi}}{(1-\gamma)}
	\]
	where 
	\[
	\kappa_{\rho} \leq  \max_{i \in \{ 1,\cdots,m \}}  \frac{  \eta_{\pi^*}\left( \phi^{-1}(i) \right) }{ \rho\left( \phi^{-1}(i) \right) }.
	\]
\end{thm}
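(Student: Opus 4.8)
The plan is to obtain this as a specialization of the general stationary-point guarantee of \cite{bhandari2019global}, whose proof runs through a gradient-domination argument that I would reconstruct as follows. Writing the advantage function $A_{\pi}(s,a)=Q_{\pi}(s,a)-V_{\pi}(s)$, the performance-difference lemma expresses the suboptimality as the advantage of $\pi_{\infty}$ measured under the \emph{optimal} policy's occupancy,
\[
J(\pi^*)-J(\pi_{\infty}) \;=\; \sum_{s\in\Sc}\eta_{\pi^*}(s)\sum_{a\in\Ac}\pi^*(s,a)\,A_{\pi_{\infty}}(s,a).
\]
Since $\nabla_{\pi(s,a)}J(\pi_{\infty})=\eta_{\pi_{\infty}}(s)Q_{\pi_{\infty}}(s,a)$ and $\Pi_{\phi}$ is a product of simplices indexed by segments, the stationarity inequality is equivalent to the statement that, for every segment $i$ and action $a$, the occupancy-weighted aggregated advantage $\sum_{s\in\phi^{-1}(i)}\eta_{\pi_{\infty}}(s)A_{\pi_{\infty}}(s,a)$ is nonpositive; equivalently, $\pi_{\infty}$ acts greedily with respect to the $\eta_{\pi_{\infty}}$-averaged $Q$-values $\hat Q_i(a)=\tfrac{1}{\eta_{\pi_{\infty}}(\phi^{-1}(i))}\sum_{s\in\phi^{-1}(i)}\eta_{\pi_{\infty}}(s)Q_{\pi_{\infty}}(s,a)$ on each segment.

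The two quantities to extract from the general bound are an on-policy improvement error and a distribution-mismatch coefficient. For the first, I would bound the best one-step greedy improvement under $\pi_{\infty}$'s \emph{own} occupancy: using that the entries of $Q_{\pi_{\infty}}(\cdot,a)$ vary by at most $\epsilon_{\phi}$ across a common segment, together with the per-segment greedy property above, a short calculation gives $\sum_{s}\eta_{\pi_{\infty}}(s)\big(\max_{a}Q_{\pi_{\infty}}(s,a)-V_{\pi_{\infty}}(s)\big)\le \epsilon_{\phi}$. For the second, starting from the bound $\sum_a\pi^*(s,a)A_{\pi_\infty}(s,a)\le \max_a A_{\pi_\infty}(s,a)\ge 0$ and reweighting from $\eta_{\pi^*}$ to $\eta_{\pi_{\infty}}$, I would pay a change-of-measure factor; because every policy in $\Pi_{\phi}$, and hence every feasible improvement direction, depends on $s$ only through $\phi(s)$, this factor need only be controlled at the level of the segment marginals, giving the ratio $\max_i \eta_{\pi^*}(\phi^{-1}(i))/\eta_{\pi_{\infty}}(\phi^{-1}(i))$. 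Finally the elementary inequality $\eta_{\pi_{\infty}}\ge (1-\gamma)\rho$ (immediate from $\eta_{\pi_{\infty}}=(1-\gamma)\sum_{t}\gamma^t\rho P_{\pi_{\infty}}^t$, summed over each segment) converts the denominator into $\rho$ and produces the explicit factor $\tfrac{1}{1-\gamma}$, bounding the coefficient by $\tfrac{1}{1-\gamma}\max_i \eta_{\pi^*}(\phi^{-1}(i))/\rho(\phi^{-1}(i))=\kappa_{\rho}/(1-\gamma)$ and yielding the claimed estimate.

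The main obstacle is keeping the change of measure at the segment level rather than the state level. The naive reweighting described above is only immediately legitimate state-by-state, producing the far cruder coefficient $\max_s \eta_{\pi^*}(s)/\eta_{\pi_{\infty}}(s)$; passing to the segment-marginal ratio requires that the transferred quantity be effectively constant on segments, which is exactly what the feature structure of $\Pi_{\phi}$ and the bookkeeping in \cite{bhandari2019global} supply. A second, deliberate feature of the argument is that I would use only the one-sided, occupancy-averaged control of the improvement error and \emph{forgo} the tighter two-sided per-state estimate $|A_{\pi_{\infty}}(s,a)-\hat Q_{\phi(s)}(a)+\hat{V}|\le 2\epsilon_{\phi}$; were that pointwise bound invoked, the suboptimality would collapse to the mismatch-free $O(\epsilon_{\phi})$ guarantee of Theorem \ref{thm: main result}. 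It is precisely this looser, mismatch-mediated accounting that forces the horizon dependence to enter through the change of measure, rendering Theorem \ref{thm:old_result} weaker than Theorem \ref{thm: main result}.
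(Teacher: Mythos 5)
First, a framing point: the paper contains no proof of Theorem~\ref{thm:old_result} --- it is quoted as a specialization of a result in \cite{bhandari2019global} --- so your reconstruction is being measured against that (unspelled-out) derivation. Your skeleton is the right one, and several steps check out and are nontrivial: the performance-difference identity $J(\pi^*)-J(\pi_{\infty})=\sum_{s}\eta_{\pi^*}(s)\sum_a \pi^*(s,a)A_{\pi_{\infty}}(s,a)$ with the paper's $(1-\gamma)$-normalization; the reduction of stationarity to per-segment greediness with respect to $\hat Q_i(a)$, which is exactly the content of Lemma~\ref{lem: Bellman for stationary}; the on-policy improvement bound $\E_{S\sim\eta_{\pi_{\infty}}}\bigl[\max_a Q_{\pi_{\infty}}(S,a)-V_{\pi_{\infty}}(S)\bigr]\le\epsilon_{\phi}$, which is correct (within segment $i$ one has $\max_a Q_{\pi_{\infty}}(s,a)\le\max_a\hat Q_i(a)+\epsilon_{\phi}$, while stationarity forces $\E[V_{\pi_{\infty}}(S)\mid S\in\phi^{-1}(i)]=\max_a\hat Q_i(a)$); and the componentwise inequality $\eta_{\pi_{\infty}}\succeq(1-\gamma)\rho$, which passes to segment marginals.

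The genuine gap is exactly where you flag it, and the gesture toward ``bookkeeping in \cite{bhandari2019global}'' does not repair it. The transferred quantity $g(s)=\max_a A_{\pi_{\infty}}(s,a)$ is \emph{not} constant on segments --- it oscillates by up to $2\epsilon_{\phi}$ within a segment --- so the segment-marginal reweighting $\sum_s\eta_{\pi^*}(s)g(s)\le\bigl(\max_i \eta_{\pi^*}(\phi^{-1}(i))/\eta_{\pi_{\infty}}(\phi^{-1}(i))\bigr)\sum_s\eta_{\pi_{\infty}}(s)g(s)$ is false in general: it requires comparing the conditional means of $g$ under $\eta_{\pi^*}$ and $\eta_{\pi_{\infty}}$ on each segment, which differ by up to the within-segment oscillation. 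What your argument legitimately yields is the state-level coefficient $\max_s \eta_{\pi^*}(s)/\rho(s)$, i.e.\ the ``simple'' form of \cite{kakade2002approximately} that the paper explicitly distinguishes from the segment-aggregated $\kappa_{\rho}$ of Theorem~\ref{thm:old_result}. Worse, the only available tool for bridging the conditional means is the within-segment variation bound, and invoking it lands you in the dilemma your last paragraph describes: used in averaged form it produces $\kappa_{\rho}\epsilon_{\phi}/(1-\gamma)+2\epsilon_{\phi}$, overshooting the stated inequality, while used pointwise it collapses the whole bound --- since stationarity makes the segment-aggregated Bellman error $\max_a\hat Q_i(a)-\E[V_{\pi_{\infty}}(S)\mid S\in\phi^{-1}(i)]$ exactly zero --- to the $2\epsilon_{\phi}$ guarantee of Theorem~\ref{thm: main result}. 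So the one substantive refinement the theorem asserts beyond the generic mismatch bound, namely that $\kappa_{\rho}$ aggregates across segments, is precisely what remains unproven; it must come from the specific structure of the cited result, where $\kappa_{\rho}$ is a more complex distribution-shift functional that the theorem only claims is \emph{bounded} by the segment-marginal ratio, not from a naive change of measure. (Cosmetically, ``$\sum_a\pi^*(s,a)A_{\pi_{\infty}}(s,a)\le\max_a A_{\pi_{\infty}}(s,a)\ge0$'' should be stated as two separate inequalities.)
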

Here, $\kappa_{\rho}$ captures whether the weight the initial distribution $\rho$ places on each segment of the state partition is aligned with the occupancy measure under an optimal policy. The form here is somewhat stronger than the simple one in \cite{kakade2002approximately}, which does not aggregate across segments, but it is still problematic. Without special knowledge about the optimal policy, it is impossible to guarantee $\kappa_{\rho}$ is smaller than the number of segments $m$.  Worse perhaps is the dependence on the effective horizon $1/(1-\gamma)$. Recall from Section \ref{sec: formulation} that $J(\pi) \in [0,1]$ has the interpretation of a geometric average reward per decision. The optimality gap $	J(\pi^*) - J(\pi_{\infty})$ then represents a kind of average per-decision regret produced by a limiting policy. The dependence on $1/(1-\gamma)$ on the right hand side is then highly problematic, suggesting performance degrades entirely in a long horizon regime. While undesirable, this horizon dependence is unavoidable under some classic approximate dynamic programming procedures. This was shown for approximate value iteration (with a fixed state-weighting) by \cite{van2006performance}. In the next section, we will show this is true for approximate policy iteration as well. 

\begin{remark}[Strengthened results in \cite{agarwal2019optimality}]\label{rem:updates_to_agrawal}
	Initially, a similar result to Theorem \ref{thm:old_result} could be found in \cite{agarwal2019optimality}, although with an even worse dependence on the discount factor.  However, contemporaneously with this paper, their paper's results were updated and stated in terms of a notion called transfer error. Specializing this bound to our setting would show that the limiting optimality gap is bounded by $2\epsilon_{\phi}\sqrt{k}$ under a state-aggregated natural policy gradient method. That bound avoids a poor dependence on the problem's time horizon. However, relative to Theorem \ref{thm: main result} it still has a poor dependence on the number of actions. Such an upper bound cannot be compared cleanly against the lower bound for API we provide in Theorem \ref{thm:api_lower}, so for the purposes of this paper the tighter result in Theorem \ref{thm: main result} is critical. 
\end{remark}

\paragraph{Proof of Theorem \ref{thm: main result}.}
The next lemma is a version of the policy gradient theorem \citep{sutton2018reinforcement} that applies with directly parameterized policies. It is easy to deduce this formula from ones in \cite{agarwal2019optimality}, for example.  The inner product interpretation in the statement is inspired by \cite{konda2000actor}.  For any given state-relevance weights $w \in \mathbb{R}^{|\Sc|}$, define the inner product $\langle \cdot , \cdot \rangle_{w\times 1}$ on $\mathbb{R}^{|\Sc|\times |\Ac|}$ by  
\begin{equation}\label{eq:weighted_inner_product}
	 \langle Q\, , \, Q' \rangle_{w \times 1} = \sum_{s\in \Sc} \sum_{a\in \Ac} w(s)  Q(s,a)Q'(s,a).  
\end{equation}

\begin{lemma}[Policy gradient theorem for directional derivatives]\label{lem: pg thm} For each $\pi, \pi' \in \Pi$, 
	\begin{equation}\label{eq: directional derivatives}
	\langle \nabla J(\pi), \pi' -\pi\rangle = \sum_{s \in \Sc} \sum_{a\in \Ac} \eta_{\pi}(s)   Q_{\pi}(s,a) \left( \pi'(s,a) - \pi(s,a) \right)  = \langle Q_{\pi}\, , \, \pi'-\pi \rangle_{\eta_{\pi} \times 1}
	\end{equation}	
\end{lemma}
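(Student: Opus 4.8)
The plan is to compute the gradient of $J$ coordinate-by-coordinate, viewing each $\pi$ as the ambient vector $(\pi(s,a)) \in \reals^{|\Sc|\cdot|\Ac|}$, to establish that $\partial J(\pi)/\partial \pi(s,a) = \eta_{\pi}(s)\, Q_{\pi}(s,a)$, and then to read off both claimed equalities. Since $J$ is differentiable (indeed smooth, by Lemma~\ref{lem: smoothness}), the inner product $\langle \nabla J(\pi), \pi'-\pi\rangle$ agrees with the directional derivative $\frac{d}{dt} J\!\left(\pi + t(\pi'-\pi)\right)\big|_{t=0}$, so it suffices to compute the partial derivatives of $J$ and contract them against the feasible direction $\pi'-\pi$.

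To obtain the partials I would work from the matrix form of the value function. Write $r_{\pi}\in\reals^{|\Sc|}$ for the reward vector with entries $r_{\pi}(s) = \sum_{a} \pi(s,a) r(s,a)$ and $P_{\pi}$ for the transition matrix with entries $(P_{\pi})_{s,s'} = \sum_{a} \pi(s,a) P(s'|s,a)$, so that $V_{\pi}$ is the unique solution of $V_{\pi} = r_{\pi} + \gamma P_{\pi} V_{\pi}$, equivalently $V_{\pi} = (I - \gamma P_{\pi})^{-1} r_{\pi}$; here $I - \gamma P_{\pi}$ is invertible because $\gamma<1$ and $P_{\pi}$ is row-stochastic. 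Both $r_{\pi}$ and $P_{\pi}$ are \emph{linear} in the coordinates of $\pi$, with $\partial r_{\pi}/\partial \pi(s,a) = r(s,a)\,e_s$ and $\partial P_{\pi}/\partial\pi(s,a) = e_s\,(P(\cdot|s,a))\tr$, each nonzero only in row $s$, where $e_s\in\reals^{|\Sc|}$ is the $s$-th standard basis vector. Differentiating the fixed-point relation and rearranging gives $(I-\gamma P_{\pi})\,\partial V_{\pi}/\partial\pi(s,a) = \partial r_{\pi}/\partial\pi(s,a) + \gamma\,(\partial P_{\pi}/\partial\pi(s,a))\,V_{\pi}$, whose right-hand side equals $\left(r(s,a) + \gamma\sum_{s'} P(s'|s,a) V_{\pi}(s')\right) e_s = Q_{\pi}(s,a)\,e_s$, exactly the state-action value from Section~\ref{sec: formulation}. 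Hence $\partial V_{\pi}/\partial\pi(s,a) = Q_{\pi}(s,a)\,(I-\gamma P_{\pi})^{-1} e_s$.

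It then remains to assemble these into $J$ using $J(\pi) = (1-\gamma)\sum_s \rho(s) V_{\pi}(s) = (1-\gamma)\,\rho\, V_{\pi}$ (with $\rho$ a row vector). Differentiating and invoking the Neumann series $(1-\gamma)\rho(I-\gamma P_{\pi})^{-1} = (1-\gamma)\rho \sum_{t\ge 0}\gamma^t P_{\pi}^t = \eta_{\pi}$, which is precisely the definition of the discounted occupancy as a row vector, yields $\partial J(\pi)/\partial\pi(s,a) = Q_{\pi}(s,a)\,\eta_{\pi}(s)$. Contracting this against $\pi'-\pi$ gives the first displayed equality, and comparing the result with the definition~\eqref{eq:weighted_inner_product} of $\langle\cdot,\cdot\rangle_{w\times 1}$ with weights $w=\eta_{\pi}$ gives the second. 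The only points needing care are justifying that $V_{\pi}$ is differentiable in $\pi$ (which follows from invertibility of $I-\gamma P_{\pi}$ on a neighborhood, so that the ambient extension of $J$ is smooth there) and the routine bookkeeping that the coordinate partials reassemble into the occupancy-weighted form; I do not expect any substantive obstacle, consistent with the paper's remark that the formula can be deduced from \cite{agarwal2019optimality}.
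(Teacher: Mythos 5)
Your proof is correct, but it takes a different route from the paper's. You compute the gradient explicitly by implicit differentiation of the Bellman fixed point $V_{\pi} = r_{\pi} + \gamma P_{\pi} V_{\pi}$, obtaining $\partial V_{\pi}/\partial \pi(s,a) = Q_{\pi}(s,a)(I-\gamma P_{\pi})^{-1}e_s$ and hence $\partial J(\pi)/\partial \pi(s,a) = \eta_{\pi}(s) Q_{\pi}(s,a)$, from which both equalities follow by contraction; this is sound (the identification $\eta_{\pi} = (1-\gamma)\rho(I-\gamma P_{\pi})^{-1}$ and the invertibility argument are exactly right, and the resulting coordinate formula agrees with the one the paper states in its appendix). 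The paper instead starts from the \emph{performance difference lemma}, writing the exact finite difference $J(\pi') - J(\pi) = \sum_{s,a} \eta_{\pi'}(s)\left(\pi'(s,a)-\pi(s,a)\right)Q_{\pi}(s,a)$, splits off the $\eta_{\pi}$-weighted term, and argues that the residual term weighted by $\eta_{\pi'}-\eta_{\pi}$ is $O(\|\pi'-\pi\|^2)$ because $\pi \mapsto \eta_{\pi}$ is differentiable. The trade-off: your argument is a complete, self-contained derivation of the full gradient vector (the paper's is explicitly only a sketch, and identifying the linear part of a finite-difference expansion requires the very differentiability facts you establish directly), whereas the paper's route exposes the precise form of the second-order remainder as a distribution-shift term in $\eta_{\pi'}-\eta_{\pi}$, which is not incidental --- that expression \eqref{eq:pg_thm_dist_shift} is reused later in Section \ref{sec:actor_critic} to explain why large (non-soft) policy updates in API can fail. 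So your proof is the better standalone proof of the lemma, while the paper's sketch is engineered to set up a later discussion.
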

\begin{proof}[Proof sketch]
The proof sketch here gives insight into the second-order remainder error term in a first-order Taylor expansion of $J(\cdot)$. We have,
	\begin{align}\nonumber
		J(\pi') - J(\pi) =& \sum_{s \in \Sc} \sum_{a\in \Ac} \eta_{\pi'}(s)   \left( \pi'(s,a) - \pi(s,a)\right) Q_{\pi}(s, a) \\
		=& \langle Q_{\pi}\, , \, \pi'-\pi \rangle_{\eta_{\pi} \times 1}
		+ \underbrace{\sum_{s \in \Sc} \sum_{a\in \Ac} \left(\eta_{\pi'}(s) - \eta_{\pi}(s)\right) \left( \pi'(s,a) - \pi(s,a)\right) Q_{\pi}(s, a)}_{=O(\| \pi'-\pi \|^2 )}. \label{eq:pg_thm_dist_shift}
	\end{align}
	The first equality is a simple but powerful result known in the RL literature as the \emph{performance difference lemma} \citep{kakade2002approximately}. That the remainder term is second order uses that $\pi \mapsto P_{\pi}$ is linear and therefore  $\eta_{\pi} = (1-\gamma) \rho (I-\gamma P_{\pi})^{-1}$ is differentiable in $\pi$. 
\end{proof}

Since $Q(s,d)=\sum_{a} Q(s,a) d_a$ for any action distribution $d\in \Delta(\Ac)$, this formula could be written as $\langle \nabla J(\pi), \pi' -\pi\rangle =  \E_{S\sim \eta_{\pi}}\left[ Q_{\pi}(S, \pi'(S) ) - Q_{\pi}(S, \pi(S) ) \right]$. One can interpret $Q_{\pi}(s,a)$ as measuring the benefit of switching from $\pi$ to action $a$ for a single period. The policy gradient formula \eqref{eq: directional derivatives} says that the infinite-horizon impact of a local policy change in the direction of $\pi'$ is equal to the average benefit of switching to policy $\pi'$ for a single period and at a random state. 

The next lemma is a special case of one in \cite{bhandari2019global}. This simplified setting allows for an extremely simple proof, so we include it for completeness. Equation \eqref{eq:bellman_for_stationary} can be viewed as an approximate Bellman equation within the restricted class of policies. 
\begin{lemma}[An approximate Bellman equation for stationary points]
	\label{lem: Bellman for stationary}
	If $\pi_{\infty}$ is a stationary point of $J(\cdot)$ on $\Pi_{\phi}$, then 
	\begin{equation}\label{eq:bellman_for_stationary}
	\E[V_{\pi_{\infty}}(S)] = \max_{\pi \in \Pi_{\phi}} \E\left[ T_{\pi} V_{\pi_{\infty}}(S)\right]  \quad \text{where } S\sim \eta_{\pi_{\infty}}. 
	\end{equation}
\end{lemma}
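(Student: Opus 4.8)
The plan is to translate the variational stationarity inequality directly into a statement about Bellman operators, using the policy gradient theorem (Lemma~\ref{lem: pg thm}) together with the fixed-point characterization of $V_{\pi_{\infty}}$. First I would invoke the definition of a stationary point, which gives $\langle \nabla J(\pi_{\infty}), \pi' - \pi_{\infty} \rangle \leq 0$ for every $\pi' \in \Pi_{\phi}$. Applying Lemma~\ref{lem: pg thm} rewrites the left-hand side as the weighted inner product $\langle Q_{\pi_{\infty}}, \pi' - \pi_{\infty} \rangle_{\eta_{\pi_{\infty}} \times 1} = \sum_{s \in \Sc} \eta_{\pi_{\infty}}(s) \sum_{a \in \Ac} Q_{\pi_{\infty}}(s,a)\bigl(\pi'(s,a) - \pi_{\infty}(s,a)\bigr)$.

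Next I would exploit the relations in \eqref{eq: Q to bellman}: since $\sum_{a} Q_{\pi_{\infty}}(s,a)\pi'(s,a) = Q_{\pi_{\infty}}(s,\pi'(s)) = (T_{\pi'} V_{\pi_{\infty}})(s)$, each inner product becomes an expectation of a one-step Bellman backup. The $\pi_{\infty}$ term collapses by the fixed-point property $T_{\pi_{\infty}} V_{\pi_{\infty}} = V_{\pi_{\infty}}$, because $\sum_{a} Q_{\pi_{\infty}}(s,a)\pi_{\infty}(s,a) = (T_{\pi_{\infty}} V_{\pi_{\infty}})(s) = V_{\pi_{\infty}}(s)$. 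Hence, writing $S \sim \eta_{\pi_{\infty}}$, the stationarity inequality reads $\E\bigl[(T_{\pi'} V_{\pi_{\infty}})(S)\bigr] \leq \E\bigl[V_{\pi_{\infty}}(S)\bigr]$ for every $\pi' \in \Pi_{\phi}$.

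Finally I would take the maximum over $\pi' \in \Pi_{\phi}$. The displayed inequality yields $\max_{\pi' \in \Pi_{\phi}} \E\bigl[(T_{\pi'} V_{\pi_{\infty}})(S)\bigr] \leq \E\bigl[V_{\pi_{\infty}}(S)\bigr]$, with the max attained because $\pi \mapsto \E\bigl[(T_{\pi} V_{\pi_{\infty}})(S)\bigr]$ is linear in $\pi$ (the overloaded definitions of $r(s,d)$ and $P(s'|s,d)$ are linear) while $\Pi_{\phi}$ is convex and compact. The reverse inequality is immediate: plugging the feasible choice $\pi' = \pi_{\infty} \in \Pi_{\phi}$ into the max gives $\E\bigl[(T_{\pi_{\infty}} V_{\pi_{\infty}})(S)\bigr] = \E\bigl[V_{\pi_{\infty}}(S)\bigr]$ again by the fixed-point identity, so the max is at least $\E\bigl[V_{\pi_{\infty}}(S)\bigr]$. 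Combining the two directions delivers \eqref{eq:bellman_for_stationary}.

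I do not anticipate a genuine obstacle here; the lemma's preamble itself flags that the state-aggregation setting makes the argument very short, and the decisive move is purely the bookkeeping in \eqref{eq: Q to bellman} that converts the weighted inner product into the Bellman form. The only point requiring mild care is the membership $\pi_{\infty} \in \Pi_{\phi}$, needed so that $\pi_{\infty}$ is an admissible competitor in the max for the reverse inequality; this holds because the constrained stationary points of interest—e.g.\ limit points of the projected iteration in Lemma~\ref{lem: convergence to stationary}—lie in the closed set $\Pi_{\phi}$.
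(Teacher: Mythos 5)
Your proposal is correct and follows essentially the same route as the paper's proof: stationarity, then Lemma~\ref{lem: pg thm} to rewrite the directional derivative as $\langle Q_{\pi_{\infty}}, \pi' - \pi_{\infty}\rangle_{\eta_{\pi_{\infty}}\times 1}$, then \eqref{eq: Q to bellman} and the fixed-point identity $T_{\pi_{\infty}}V_{\pi_{\infty}} = V_{\pi_{\infty}}$ to obtain $\E[(T_{\pi'}V_{\pi_{\infty}})(S)] \leq \E[V_{\pi_{\infty}}(S)]$, with the reverse inequality from the admissibility of $\pi_{\infty} \in \Pi_{\phi}$. The extra observations you add (attainment of the max, closedness of $\Pi_{\phi}$) are harmless and the paper treats them implicitly.
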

\begin{proof}
	Continue to let $S$ denote a random draw from $\eta_{\pi_{\infty}}$. For every $\pi \in \Pi_{\phi}$ we have,  
	\begin{align*}
	0  \geq \langle \nabla J(\pi_{\infty}) \, , \,  \pi - \pi_{\infty} \rangle = \langle Q_{\pi_{\infty}}\, , \, \pi- \pi_{\infty} \rangle_{\eta_{\pi_{\infty}} \times 1} &=  \E\left[  Q_{\pi_{\infty}}(S,  \pi(S))) - Q_{\pi_{\infty}}(S,   \pi_{\infty}(S) ) \right] \\
	&= \E\left[ \left(T_{\pi} V_{\pi_{\infty}}\right)(S) -  V_{\pi_{\infty}}(S)  \right].
	\end{align*}
	The second equality uses \eqref{eq: Q to bellman}. The reverse inequality uses that $\pi_{\infty} \in \Pi_{\phi}$ along with the Bellman equation $V_{\pi_\infty} = T_{\pi_{\infty}}V_{\pi_{\infty}}$. 
\end{proof}

We are now ready to prove Theorem \ref{thm: main result}. 
\begin{proof}[Proof of Theorem \ref{thm: main result}] We apply Lemma \ref{lem: Bellman for stationary} and several times use the connection between  $Q$ functions and Bellman operators in \eqref{eq: Q to bellman}. For notational convenience, throughout let $S$ denote a random draw from $\eta_{\pi_{\infty}}$ and let $\Sc_i=\phi^{-1}(i)$ denote the $i^{\text{th} }$ segment of the state space. Since $\E\left[T_{\pi_{\infty}} V_{\pi_{\infty}}(S)\right] = \max_{\pi \in \Pi_{\phi}} \E\left[ T_{\pi} V_{\pi_{\infty}}(S)\right]$, we have 
	\begin{align*}
	\pi_{\infty} \in \argmax_{\pi \in \Pi_{\phi}} \E\left[T_{\pi} V_{\pi_{\infty}}(S)\right] &= \argmax_{\pi \in \Pi_{\phi}} \E\left[ Q_{\pi_{\infty}}(S, \pi(S) ) \right]\\&= \argmax_{\pi \in \Pi_{\phi}} \sum_{i=1}^{m} \E\left[Q_{\pi_{\infty}}(S,\pi(S)) \mid S\in \Sc_i \right] \Prob(S \in \Sc_i).
	\end{align*}
	Let $a^{\infty}_{i}$ denote the action selected by policy $\pi_{\infty}$ at any state $s \in \phi^{-1}(i)$ in segment $i$. The vector $(a^{\infty}_1, \cdots, a^{\infty}_{m})$ provides a full description of the policy $\pi_{\infty}$. The optimization problem above decomposes across sates, showing 
	\[ 
	a^{\infty}_i \in \argmax_{a\in \Ac} \E\left[ Q_{\pi_{\infty}}(S,a) \mid S \in \Sc_i\right] \qquad i=1,\cdots, m. 
	\]
	Here we use implicitly that $\Prob(S \in \Sc_i)>0$, which is assured by our assumption that $\rho(\Sc_i)>0$. 
	Now, we use the definition of $\epsilon_{\phi}$ to show $a^{\infty}_i$ must be near optimal at every state in partition $i$. 
	Pick 
	\[
	(s_i^*, a_i^*) \in \argmax_{s\in \Sc_i, a \in \Ac} Q_{\pi_{\infty}}(s,a).
	\]
	By the optimality of $a_{i}^{\infty}$ there must exist some $\tilde{s} \in \Sc_i$ such that $Q_{\pi_{\infty}}(\tilde{s}, a_i^{\infty}) \geq  Q_{\pi_{\infty}}(\tilde{s}, a_i^*)$. For any other $s\in \Sc_i$ we have
	\[ 
	Q_{\pi_{\infty}}(s, a_i^\infty) \geq Q_{\pi_{\infty}}(\tilde{s}, a_i^{\infty}) - \epsilon_{\phi} \geq Q_{\pi_{\infty}}(\tilde{s}, a^*_i) - \epsilon_\phi \geq  Q_{\pi_{\infty}}(s^*_i, a^*_i) - 2\epsilon_{\phi} \geq \max_{a \in \Ac} Q_{\pi_{\infty}}(s, a) - 2\epsilon_{\phi}. 
	\]
	Observe that  $Q_{\pi_{\infty}}(s, a_i^\infty)  = Q_{\pi_{\infty}}(s, \pi_{\infty}(s)) = V_{\pi_\infty}(s)$ and $\max_{a \in \Ac} Q_{\pi_{\infty}}(s, a)= TV_{\pi_{\infty}}(s)$. Since $s$ is arbitrary, this gives element-wise inequality $V_{\pi_{\infty}} \succeq TV_{\pi_{\infty}} - 2\epsilon_{\phi} e$ where $e$ denotes a vector of ones. Using the monotonicity of Bellman operators and the fact that $T(V+ce)=TV+\gamma ce$ \citep{bertsekas1995dynamic}, we have 
	\[
	V_{\pi_{\infty}} \succeq TV_{\pi_{\infty}} - 2\epsilon_{\phi} e \succeq  T^2 V_{\pi_{\infty}} + 2\gamma \epsilon_{\phi} e-2\epsilon_{\phi} e\succeq \cdots \succeq  V^* - \frac{2\epsilon_{\phi}}{1-\gamma}  e .
	\]
\end{proof}

\section{Horizon dependent per-period regret under API}
\label{sec: api}
Approximate policy iteration is one of the classic approximate dynamic programming algorithms. It has deep connections to popular methods today, like $Q$-learning with target networks that are infrequently updated \citep{mnih2015human}. Approximate policy iteration is presented in Algorithm \ref{alg: api}. The norm there is the one induced by the inner product in \eqref{eq:weighted_inner_product}, defined by $\| Q\|_{2, w\times 1}= \sqrt{ \sum_{s} \sum_{a} w(s) Q(s,a)^2}$. The procedure mimics the  classic policy iteration algorithm \citep{puterman2014} except it uses a regression based approximation in the policy evaluation step, aiming to select a state-aggregated value function that is close to the true one in terms of mean squared error. It is worth noting that this is a somewhat idealized form of the algorithm. Practical algorithms use efficient sample based approximations to the least-squares problem defining $\hat{Q}$. See \cite{bertsekas1996neuro} or \cite{bertsekas2011approximate} for an introduction.

How does this algorithm perform? Our main result in this section is captured by the following proposition, giving a lower bound on performance which is worse than the result in Theorem \ref{thm: main result} by a factor of the effective horizon $1/(1-\gamma)$. Recall from Section \ref{sec: state agg} that there is a broad equivalence between searching over the restricted class of value functions in $\Qc_{\phi}$ and searching over the restricted class of policies $\Pi_{\phi}$. Any advantage in the limiting performance of policy gradient methods is due to the way in which it searches over policies and not an advantage in representational power. 
\begin{thm}\label{thm:api_lower}
	There exists an MDP, a state-aggregation $\phi$, and initial policy $\pi_1$ such that if 
	$\left\{\pi_t\right\}_{t\in \mathbb{N}}$ is generated by Algorithm \ref{alg: api} with inputs given by $\phi$, $\pi_1$, and uniform weighting $w(s)=1/|\Sc| \,\, \forall s$, then
	\begin{equation}\label{eq: api lower} 
	\liminf_{t\to \infty} \,\, J(\pi^*) - J(\pi_{t}) \geq \frac{ \gamma \epsilon_{\phi}/4 }{(1-\gamma)}.
	\end{equation}
\end{thm}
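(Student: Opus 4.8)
The statement is a lower bound, so the plan is to exhibit a single explicit instance — an MDP $M$, a partition $\phi$, an initial policy $\pi_1$, and a suitable initial distribution $\rho$ — on which Algorithm~\ref{alg: api} provably fails, adapting the classical instability example of \cite{bertsekas1996neuro} to the state-aggregated setting. The conceptual target is dictated by the gap between this bound and Theorem~\ref{thm: main result}: whereas a policy-gradient stationary point satisfies $\|V_{\pi_\infty}-V^*\|_\infty \le 2\epsilon_\phi/(1-\gamma)$ and hence has per-period regret $O(\epsilon_\phi)$, I want API to settle on policies with $\|V_{\pi_t}-V^*\|_\infty = \Theta\!\left(\epsilon_\phi/(1-\gamma)^2\right)$ — the extra factor of $1/(1-\gamma)$ being exactly the classical loss of API relative to exact policy iteration — and to place the mass of $\rho$ on the states realizing this gap, so that $(1-\gamma)\sum_{s}\rho(s)\bigl(V^*(s)-V_{\pi_t}(s)\bigr) = \Theta(\epsilon_\phi/(1-\gamma))$. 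A key preliminary observation constrains the construction: because the uniform-weight least-squares fit replaces $Q_{\pi_t}(\cdot,a)$ by its average over each segment, and this average lies within $\epsilon_\phi$ of every in-segment value, the greedy step can only reverse the ranking of two actions whose $Q_{\pi_t}$-gap is $O(\epsilon_\phi)$. Hence no single iteration can be off by more than $O(\epsilon_\phi)$ in one-step advantage; the horizon blow-up must instead be produced by the \emph{compounding} of such small per-iteration errors across the policy-iteration dynamics, which is precisely the mechanism absent from the stationary-point analysis.

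Concretely, since exact policy evaluation followed by greedy improvement maps the finite set of deterministic aggregated policies into itself, I would proceed as follows. First, fix the instance and enumerate the deterministic policies in $\Pi_{\phi}$. Second, for each such $\pi$ solve the (small, linear) Bellman system to obtain $V_\pi$ and then $Q_\pi$ in closed form. Third, apply the uniform-weight projection onto $\Qc_{\phi}$, which for each segment $i$ and action $a$ returns $\hat Q(i,a)=\frac{1}{|\phi^{-1}(i)|}\sum_{s\in\phi^{-1}(i)} Q_\pi(s,a)$, and read off the greedy policy; this yields the explicit transition map $\pi\mapsto\pi^{+}$ that Algorithm~\ref{alg: api} induces on the finite policy set. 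Fourth, trace the orbit started from $\pi_1$ and show it is eventually trapped in a cycle (or fixed point) all of whose policies are far from optimal. The use of $\liminf$ in the statement anticipates that, as in \cite{bertsekas1996neuro}, the iterates need not converge but may chatter among several suboptimal policies; it therefore suffices to lower bound the regret of every policy in the limiting cycle.

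With the orbit pinned down, the two quantities in \eqref{eq: api lower} are computed directly from the closed forms above. The inherent error is evaluated as $\epsilon_\phi=\max_{\pi\in\Pi_{\phi}}\max_{a\in\Ac}\max_{\phi(s)=\phi(s')}|Q_\pi(s,a)-Q_\pi(s',a)|$, using that only finitely many $\pi$ and finitely many in-segment pairs need be checked. The regret is lower bounded by $J(\pi^*)-J(\pi_t)=(1-\gamma)\sum_{s}\rho(s)\bigl(V^*(s)-V_{\pi_t}(s)\bigr)$, evaluated at each $\pi_t$ in the cycle using the closed-form $V_{\pi_t}$ and the chosen $\rho$; taking the ratio of this lower bound to $\epsilon_\phi$ and minimizing over the cycle produces the constant $\frac{\gamma}{4(1-\gamma)}$ appearing in \eqref{eq: api lower}.

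The main difficulty is the joint design of the instance, because its requirements pull in opposite directions. Making $\epsilon_\phi$ small forces the within-segment $Q_\pi$-spreads to be tiny for \emph{every} aggregated policy, yet the orbit of the API map must remain trapped among policies whose value gap is as large as $\Theta(\epsilon_\phi/(1-\gamma)^2)$; reconciling these requires routing the large action-value gaps into differences \emph{between} actions (and across segments) rather than \emph{within} a segment for a fixed action, so that each per-step greedy mistake stays $O(\epsilon_\phi)$ while the mistakes accumulate over the effective horizon. Verifying that the greedy-after-averaging map genuinely chatters — rather than converging to the near-optimal aggregated policy that Theorem~\ref{thm: main result} guarantees must exist — is a finite but delicate case analysis, and sharpening the resulting $\Theta(\epsilon_\phi/(1-\gamma))$ into the explicit constant $\frac{\gamma}{4(1-\gamma)}$ is the last, bookkeeping-heavy, step.
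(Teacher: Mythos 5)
Your overall strategy is the paper's: adapt the \cite{bertsekas1996neuro} instability example to a state-aggregated chain, show the API iterates get trapped in a cycle of policies with per-period regret $\Theta(\epsilon_{\phi}/(1-\gamma))$, concentrate $\rho$ on the states realizing the gap, and use the cycle (rather than convergence) to obtain the $\liminf$; your quantitative targets ($\|V^*-V_{\pi_t}\|_{\infty}=\Theta(\epsilon_{\phi}/(1-\gamma)^2)$, fitted values perturbed by only $O(\epsilon_{\phi})$) are also correct. But the proposal stops exactly where the proof starts: for an existence theorem proved by construction, the instance \emph{is} the proof, and everything you defer as ``joint design of the instance,'' ``delicate case analysis,'' and ``bookkeeping'' is the entire content of the paper's argument. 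The paper's Example \ref{counterexample} resolves the design tension you flag with two specific choices. Take $2m$ states with $\phi(s)=\phi(s+m)=s$; from a bottom state $s\in\{2,\dots,m\}$, $\Move$ goes to $s-1$ with reward $0$ and $\Stay$ self-loops with reward defined by the recursion $r(s,\Stay)=\gamma r(s-1,\Stay)-c$, $r(1,\Stay)=0$; the top state $s+m$ has \emph{literally identical transition rows} to $s$, the same $\Move$ reward, and $r(s+m,\Stay)=r(s,\Stay)+\epsilon_{\phi}$; finally $c=\epsilon_{\phi}/2$. Identical transition rows make the within-segment $Q$-gap policy-independent: $Q_{\pi}(s+m,a)-Q_{\pi}(s,a)=r(s+m,a)-r(s,a)=\epsilon_{\phi}\ind(a=\Stay)$ for \emph{every} $\pi$, so the inherent error is exactly $\epsilon_{\phi}$ and the uniform-weight fit is exactly $\hat{Q}_t(s,a)=Q_{\pi_t}(s,a)+(\epsilon_{\phi}/2)\ind(a=\Stay)$. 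The reward recursion then makes the true advantage of $\Move$ over $\Stay$ equal to $c=\epsilon_{\phi}/2$ at the relevant states, so the $+\epsilon_{\phi}/2$ bias produces exact ties broken toward $\Stay$: starting from the alternating policy ($\Stay$ on even segments, $\Move$ on odd), every segment flips each iteration, giving a period-$2$ cycle in which each policy eventually stays forever at a rightmost state with per-period reward $\approx -c/(1-\gamma)$; with $\rho(m)>1/2$ this yields regret approaching $\gamma\epsilon_{\phi}/(4(1-\gamma))$ as $m\to\infty$.

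Two further points in your plan would fail as stated. First, you propose to certify $\epsilon_{\phi}$ by checking ``finitely many $\pi$,'' but Definition \ref{def:error} quantifies over \emph{all} randomized policies in $\Pi_{\phi}$, a continuum, and $Q_{\pi}$ is not linear in $\pi$; the paper avoids any such verification because the duplicated transition rows make the within-segment gaps independent of the policy. (Similarly, enumerating all deterministic aggregated policies is hopeless, since achieving the constant requires $m\to\infty$ and there are $2^{m-1}$ of them; one only needs to trace the orbit of $\pi_1$, which visits two policies.) Second, your mechanism intuition---that the horizon blow-up comes from ``compounding of small per-iteration errors across the policy-iteration dynamics''---is not how the example works: a \emph{single} API iteration already produces a disastrous policy, because the compounding is spatial, baked into the reward recursion along the chain, so that one $O(\epsilon_{\phi})$ greedy mistake commits the agent to a state whose per-period reward is $-\Theta(\epsilon_{\phi}/(1-\gamma))$. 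The cycling serves only to upgrade the $\limsup$-type conclusion of \cite{bertsekas1996neuro} to the $\liminf$ claimed in Theorem \ref{thm:api_lower}, i.e., to show the algorithm never recovers; misidentifying this would lead you to design for the wrong failure mode.
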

Later in this section, we study a variant of API which adapts state-relevance weights across iterations.

Note that a classic result of  \cite[Prop.~ 6.2]{bertsekas1996neuro}, specialized to this setting, can be used to show a bound in the other direction that matches \eqref{eq: api lower} up to a numerical constant. Technically, the analysis of \cite{bertsekas1996neuro} applies to value functions and not state-action value functions. The reader can find the same proof written in terms of state-action value functions in \cite{agarwal2019reinforcement}.

 Theorem \ref{thm:api_lower} is established through Example \ref{counterexample} below, which synthesizes an example from \cite{bertsekas1996neuro} with an  example of \cite{van2006performance}. The latter work studies approximate value iteration in state-aggregated problems, establishing a result like Theorem \ref{thm:api_lower}. The former work gives an example in which the optimality gap under API exhibits poor dependence on the horizon, but that example does not treat state-aggregation. Surprisingly, although the state-aggregated problem in Example \ref{counterexample} is similar to the example of \cite{bertsekas1996neuro}, the conclusion is stronger. The result of  \cite{bertsekas1996neuro} is analogous to \eqref{eq: api lower}, but with the limit-inferior replaced with a limit-superior. API will cycle endlessly between policies, sometimes selecting the optimal policy and sometimes selecting disastrous ones. In addition to applying in state-aggregated problems, Theorem \ref{thm:api_lower} strengthens the conclusion by showing that API consistently selects disastrous policies in the limit.

\begin{minipage}{0.46\textwidth}
	\centering
	\includegraphics[width=.95\linewidth]{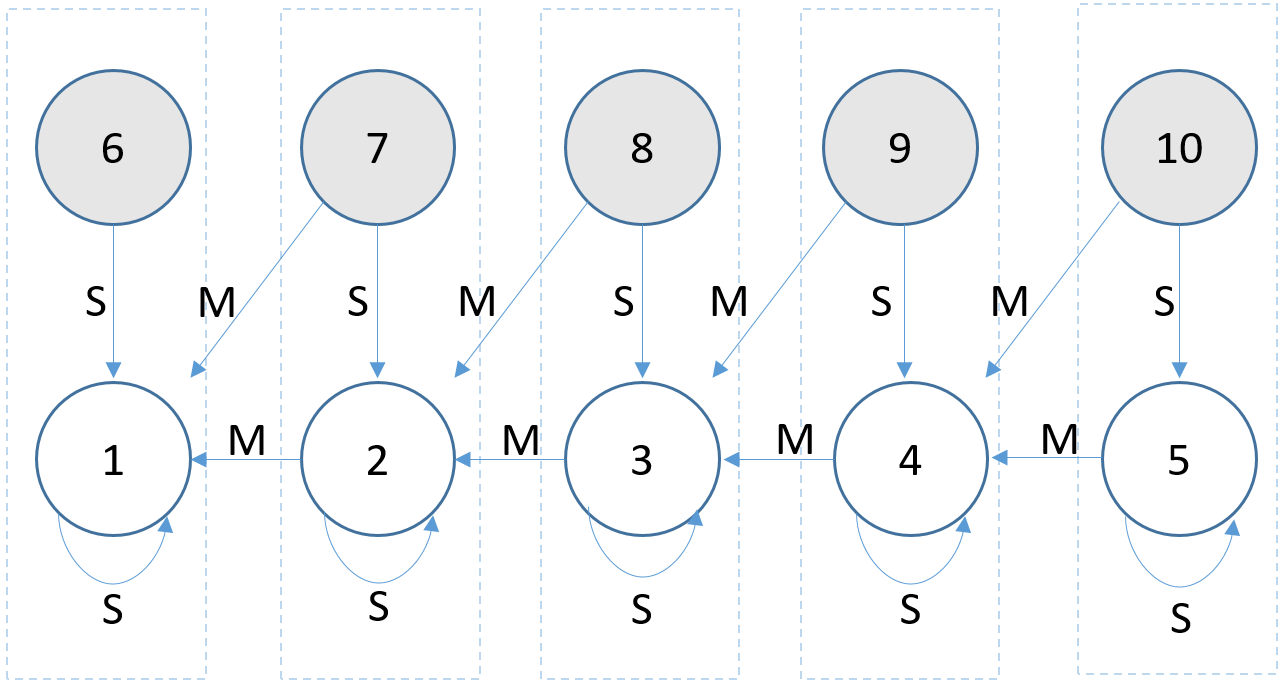}
	\captionof{figure}{A bad example for API. The actions $\Move$ and $\Stay$ are denoted by $M$ and $S$.}
	\label{fig: counterexample} 
\end{minipage}
\hfill
\begin{minipage}{0.46\textwidth}
	\hrulefill

	\begin{algorithm}[H]
		\caption{ API}
		\label{alg: api}
		\SetNlSty{texttt}{(}{)}
		\SetAlgoLined
		\SetKwInOut{Input}{input}\SetKwInOut{Output}{output}
		\Input{$w\in \Delta(\Sc)$, $\pi_1\in \Pi$, $\phi$}
		\BlankLine
		\For{$t=1,2, \cdots, $}{ 
			\tcc{Approximate policy evaluation step}
			$\hat{Q}_t \in \argmin_{\hat{Q} \in \Qc_{\phi}} \| \hat{Q}  - Q_{\pi_t} \|_{2,w\times 1}$ \;
			\tcc{Policy improvement step}
			$\pi_{t+1}(s) \in \argmax_{a\in \Ac} \hat{Q}_{t}(s,a) \, \forall s$\; 
			
		}
	\end{algorithm}\DecMargin{1em}
	
\end{minipage}

\begin{example}\label{counterexample}
Consider an MDP with $n=2m$ states, depicted in Figure \ref{fig: counterexample} for $n=10$ and $m=5$. For $s\in \{ 1,\cdots, m \}$, we have $\phi(s)= \phi(s+m)=s$. This means that the algorithms don't distinguish between $s$ and $s+m$. In state $s\in \{2,\cdots, m\}$ there are two possible actions, $\Move$, which moves the agent to state $s-1$ and generates a reward $r(s,\Move)=0$, and $\Stay$,  which keeps the agent in the same state with reward $r(s, \Stay)$.  Rewards obey the recursion
\[
r(1,\Stay)  =0 \qquad  r(s, \Stay)=\gamma r(s-1, \Stay) - c \qquad \text{for } s\in \{2, \cdots, n \},
\]
and the formula $r(s, \Stay) = -c\sum_{i=2}^{s} \gamma^{i-2}$. The negative reward for the action $\Stay$  can be thought of as a cost. State $1$ has only the costless action \Stay . (Or one can think of \Move as being identical to  \Stay in state 1).

Transition probabilities from state $s+m$ are identical to those at state $s$, and $r(s+m, \Move)=r(s, \Move)=0$, but $r(s+m, \Stay) =r(s, \Stay)+ \epsilon_{\phi}$ where $\epsilon_{\phi}>0$. Pick $c= \epsilon_{\phi}/2$. The optimal policy plays $\Move$ from every state $s\in \{2, \cdots, m\}$. 
\end{example}

Figure \ref{fig:simulation} displays  simulation results. The dashed blue line represents $2\epsilon_{\phi}$, the upper bound on the limiting optimality gap proved in Theorem \ref{thm: main result}. In particular, any optimization method that is guaranteed to reach a stationary point of $J(\cdot)$ will have a limiting optimality gap below the dashed blue line. The simulation results show that a particular variant, projected gradient ascent with small constant stepsize, in fact converges gracefully to optimality in this example. The performance of approximate policy iteration is far worse. In the second iteration, it actually reaches an optimal policy, but from there performance continues to degrade. In the limit, it cycles endlessly between two policies. That cycling behavior is common with API and is confirmed analytically below.

\begin{figure}
	\includegraphics[width=.6\linewidth, center]{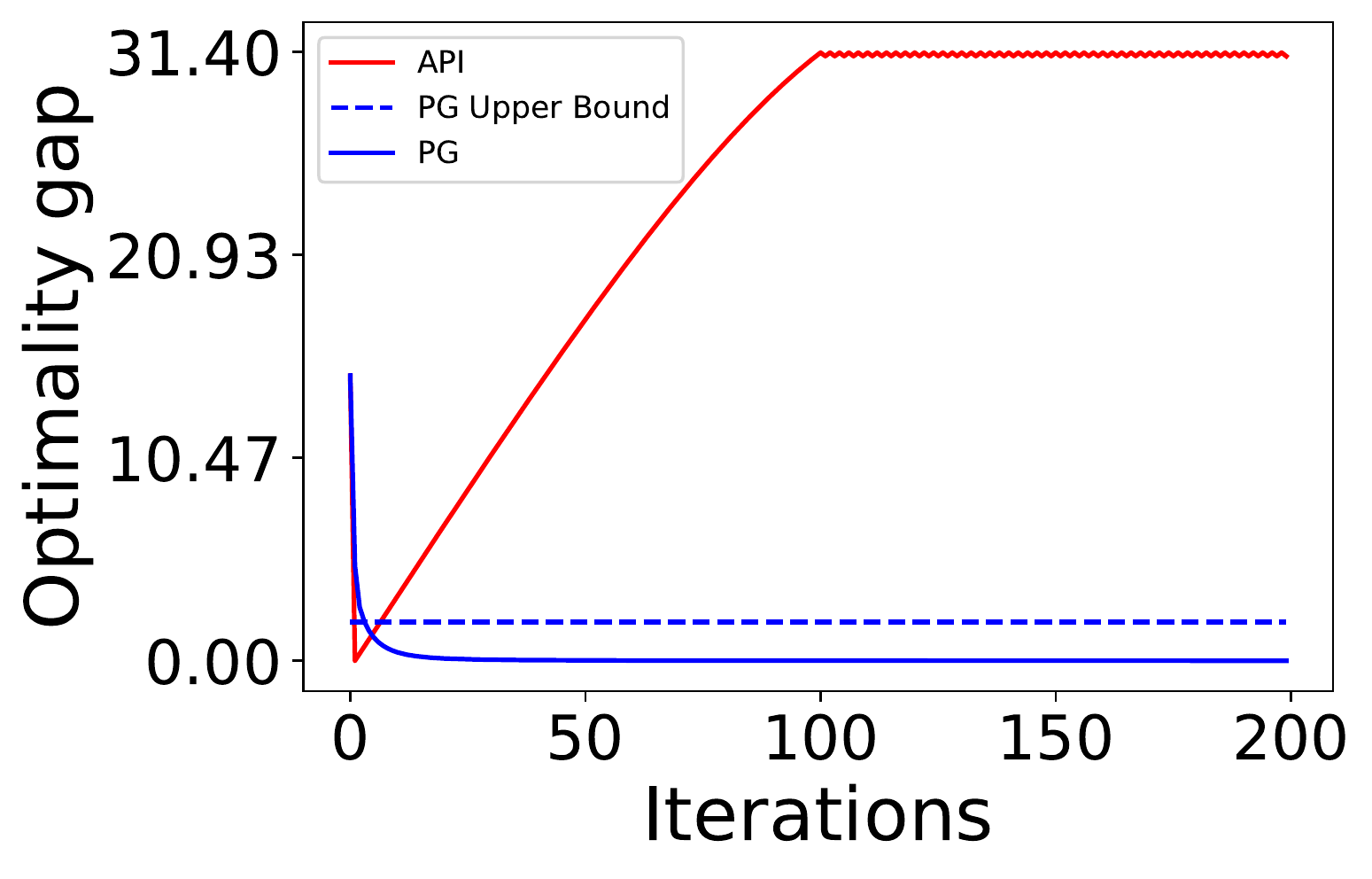}
 \caption{Performance of approximate policy iteration and policy gradient in Example \ref{counterexample}, with $n=200$ sates, discount factor $\gamma=0.99$ and $\epsilon_{\phi}=1$.  }
\label{fig:simulation}
\end{figure}

\begin{proof}[Proof of Theorem \ref{thm:api_lower}]
	Consider API applied to Example \ref{counterexample} with uniform weighting, i.e. $w(s)=1/|\Sc|$, and an initial policy $\pi_1$ with $\pi_1(s)=\Stay$ for $s\in \{2,4, 6, \cdots, m-1 \}$ and $\pi_1(s)=\Move$ for $s\in \{3, 5,7, \cdots, m \}.$ For simplicity, we have assumed $m$ is an odd number. The policy is state-aggregated, so $\pi_1(s+m)=\pi_1(s)$ for $s\leq m$. We show that the next policy produced by API, $\pi_2$, will play $\Move$ at states $\{2,4, \cdots, m-1 \}$ but play $\Stay$ at states $\{3, 5,7, \cdots, m\}$. Proceeding in this manner, one finds that $\pi_3=\pi_1$, $\pi_4=\pi_2$, and the policies cycle endlessly. 
	
	Assuming for the moment that this result holds, let us consider the optimality gap under any initial distribution with $\rho(m)>1/2$. We find $J(\pi^*)-J(\pi_t) > (1/2)(1-\gamma) \left(V^*(m)-V_{\pi_t}(m)\right)$. Observe that an optimal policy $\pi^*$, which chooses $\Move$ in every state, incurs cost $V_{\pi^*}(m)=0$. On the other hand, $V_{\pi_t}(m) = \frac{r(m, \Stay)}{1-\gamma}$ for $t$ even and $V_{\pi_t}(m)=0 +\gamma \cdot \frac{r(m-1, \Stay)}{1-\gamma}$ for $t$ odd. These formulas reflect that either policy moves at most once before staying perpetually at one of the rightmost states. We find
	\[
	J(\pi^*)-J(\pi_t) > -\frac{\gamma}{2} \cdot r(m-1, \Stay) = \frac{c \gamma}{2} \cdot \sum_{i=2}^{m-1} \gamma^{i-2} \overset{m\to \infty}{\longrightarrow} \frac{c \gamma}{2(1-\gamma)} = \frac{\epsilon_{\phi} \gamma}{4(1-\gamma)}.
	\]
	This establishes that Theorem \ref{thm:api_lower} holds for problem instances with $m$ sufficiently large. 
	
	We now turn to verifying that policies cycle in the manner described above. The weighted least-squares problem solved by API has a particularly simple form in this case. It is straightforward to show that the problem decomposes across segments of the state space and, as the conditional mean minimizes squared loss, has the form
	\begin{align*}
		\hat{Q}_{t}(s,a)  =  \E_{S\sim w} \left[ Q_{\pi_t}\left(S, a \right) \mid  S\in \phi^{-1}(s) \right] &= \frac{ Q_{\pi_t}(s,a) + Q_{\pi_t}(s+m, a) }{2}  \quad \forall s\leq m, a\in \mathcal{A}\\
		&=Q_{\pi_t}(s,a) +\left(\nicefrac{\epsilon_\phi}{2}\right)\mathbf{1}(a=\Stay ) \quad \forall s\leq m, a\in \mathcal{A}.
	\end{align*}
	That is, in each segment the value function of the current policy is overestimated by  $\nicefrac{\epsilon_\phi}{2}$ at state $s$ and under-estimated by $\nicefrac{\epsilon_\phi}{2}$ in state $s+m$. 
	
	We verify that $\pi_2$ has the form conjectured above. The proof for $\pi_3$ is uses the same ideas. Under $\pi_1$ and for $s\in \{4, 6, 8, \cdots\}$ we have $V_{\pi_1}(s) = r(s, \Stay) / (1-\gamma)$  and $V_{\pi_1}(s-1)=0+\gamma V_{\pi_1}(s-2) = \gamma r(s-2, \Stay) / (1-\gamma)$. Then,
	\begin{align*}
	Q_{\pi_1}(s, \Stay ) &=  r(s, \Stay ) + \gamma V_{\pi_1}(s)  = r(s,\Stay)/(1-\gamma)\\
	Q_{\pi_1}(s, \Move ) &=  r(s, \Move) + \gamma V_{\pi_1}(s-1) = \gamma^2 r(s-2, \Stay)/(1-\gamma). 
	\end{align*}
   Then, the least-squares approximation gives $\hat{Q}_1(s, \Stay) = r(s,\Stay)/(1-\gamma)  +\epsilon_{\phi}/2$ and $\hat{Q}_{1}(s, \Move)= \gamma^2 r(s-2, \Stay)/(1-\gamma)$. By plugging in $\epsilon_{\phi}= c/2$, one can verify that $\hat{Q}_{1}(s, \Move)> \hat{Q}_{2}(s, \Stay)$, so $\pi_2$ will play $\Move$ from states $s\in \{4, 6, 8, \cdots m \}$. The edge case of $s=2$ needs to be handled separately. One find $\hat{Q}_1(2,\Move)=0$ and $\hat{Q}_{1}(2, \Stay) = -c+ \epsilon_\phi/2 = 0$.  Breaking ties\footnote{This example could be easily modified by taking $c$ to be infitesimally smaller than $\epsilon_{\phi}/2$, in which case no tie breaking mechanism is needed.}  in favor of the action $\Stay$, we get $\pi_2(s) = \Stay$ for $s\in \{2, 4, \cdots, m\}$. 
   
   Under $\pi_1$ and for $s\in \{3, 5, 7,\cdots\}$ we have $V_{\pi_1}(s-1) = r(s-1, \Stay)/(1-\gamma)$ and $ V_{\pi_1}(s) = 0 + \gamma V_{\pi_1}(s-1) =  \gamma r(s-1, \Stay) / (1-\gamma)$. Then,  
   \begin{align*}
   	Q_{\pi_1}(s, \Stay ) &=  r(s, \Stay ) + \gamma V_{\pi_1}(s)  = r(s,\Stay) + \gamma^2 r(s-1, \Stay) / (1-\gamma)\\
   	Q_{\pi_1}(s, \Move ) &=  r(s, \Move) + \gamma V_{\pi_1}(s-1) = \gamma r(s-1, \Stay)/(1-\gamma). 
   \end{align*}
   Then, the least-squares approximation gives $\hat{Q}_1(s, \Stay) =  Q_{\pi_1}(s,\Stay)+\epsilon_{\phi}/2$ and $\hat{Q}_{1}(s, \Move)= Q_{\pi_1}(s, \Move)$. Now, the mis-estimation error $\epsilon_\phi/2$ is enough cause the algorithm to select the decision $\Stay$. In particular, 
   \begin{align*}
   \hat{Q}_1(s, \Stay) - \hat{Q}_1(s, \Move) &= \frac{\epsilon_{\phi}}{2} + r(s, \Stay) - \gamma r(s-1, \Stay)\\
   &=  \frac{\epsilon_{\phi}}{2} + \left(-c+ \gamma r(s-1, \Stay)\right)  - \gamma r(s-1, \Stay)  = \frac{\epsilon_{\phi}}{2} -c = 0.     
   \end{align*} 
   Breaking ties in favor of the action $\Stay$, we find that $\pi_2$ plays $\Stay$ in states $\{3,5,7,\cdots\}$. 
 \end{proof}

\subsection*{Brittle behavior of API with on-policy state relevance weights}

We have illustrated that policy gradient sometimes dramatically outperforms a version of API that uses a fixed state-weighting. Algorithm \ref{alg: api-adaptive} presents another natural form of API in which these weights are adapted over time. At iteration $t$, it weighs states according to the occupancy measure $\eta_{\pi_t}$, prioritizing accuracy at states that are visited often. This choice arises organically if the data used to approximate $Q_{\pi_t}$ is generated by applying $\pi_t$ in the environment.

This modification to API seems to address Example \ref{counterexample}, but it exhibits similarly brittle behavior in other examples. This possibility is validated through the numerical simulation of Example \ref{counterexample2},  depicted in Figure \ref{fig:simulation2}.

\begin{minipage}{0.46\textwidth}
	\centering
	\includegraphics[width=.95\linewidth]{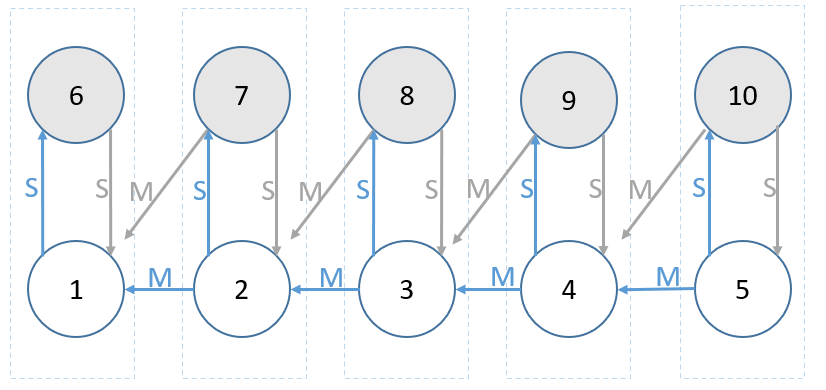}
	\captionof{figure}{A bad example for API with adaptive weights. The actions $\Move$ and $\Stay$ are denoted by $M$ and $S$. }
	\label{fig: counterexample_2} 
\end{minipage}
\hfill
\begin{minipage}{0.48\textwidth}
	\hrulefill

	\begin{algorithm}[H]
		\caption{ API with adaptive state weighting}
		\label{alg: api-adaptive}
		\SetNlSty{texttt}{(}{)}
		\SetAlgoLined
		\SetKwInOut{Input}{input}\SetKwInOut{Output}{output}
		\Input{$w\in \Delta(\Sc)$, $\pi_1\in \Pi$, $\phi$}
		\BlankLine
		\For{$t=1,2, \cdots, $}{ 
			\tcc{policy evaluation}
			$\hat{Q}_t \in \argmin_{\hat{Q} \in \Qc_{\phi}} \| \hat{Q}  - Q_{\pi_t} \|_{2,\eta_{\pi_t}\times 1}$ \;
			\tcc{Policy improvement}
			$\pi_{t+1}(s) \in \argmax_{a\in \Ac} \hat{Q}_{t}(s,a) \, \forall s$\; 
			
		}
	\end{algorithm}\DecMargin{1em}
	
\end{minipage}

\begin{example}\label{counterexample2}
	Consider an MDP with $n=2m$ states, depicted in Figure \ref{fig: counterexample_2} for $n=10$ and $m=5$. For $s\in \{ 1,\cdots, m \}$, we have $\phi(s)= \phi(s+m)=s$. This means that the algorithms don't distinguish between $s$ and $s+m$. In a state with $\phi(s)\in \{2,\cdots, m\}$ there are two possible actions, $\Move$, which moves the agent to state $s-1$ and $\Stay$,  which keeps the agent in the same state partition but brings an agent in state $s$ to $s+m$ an one who is in state $s+m$ to state $s$. The action $\Move$ generates zero reward. 
	Rewards for the action $\Stay$ at the states depicted on the top of Figure \ref{fig: counterexample_2} obey the recursion 
	\[
	r(1+m,\Stay)  =0 \qquad  r(s+m, \Stay)=\gamma r(s+m-1, \Stay) - c \qquad \text{for } s\in \{2, \cdots, m \},
	\]
	and the formula $r(s+m, \Stay) = -c\sum_{i=2}^{s} \gamma^{i-2}$. Playing the action $\Stay$ generates a higher reward in the states depicted at the bottom of Figure \ref{fig: counterexample_2}, with $r(s,\Stay)=r(s+m, \Stay)+\epsilon_{\phi}$.
\end{example}

Example \ref{counterexample2}, is designed so that Algorithm \ref{alg: api-adaptive} behaves just like Algorithm \ref{alg: api} does in Example \ref{counterexample}. Two careful modifications to the example ensure this.  First, in Example \ref{counterexample2} selecting $\Stay$ repeatedly causes the system to cycle between two states in a common partition. Second, the reward generated from playing $\Stay$ is higher in a state $s\in\{2,\cdots, m\}$  (bottom row of Figure \ref{fig: counterexample_2}) than in the corresponding state $s+m$  (top row of Figure \ref{fig: counterexample_2}).

As in the proof of Theorem \ref{thm:api_lower}, imagine Algorithm \ref{alg: api-adaptive} is applied in Example \ref{counterexample2} with initial policy $\pi_1$ that selects $\pi_1(s)=\Stay$ for $s\in \{2,4, 6, \cdots, m-1 \}$ and $\pi_1(s)=\Move$ for $s\in \{3, 5,7, \cdots, m \}$. One can show, and this is validated numerically with open-source code, that the next policy produced by API, $\pi_2$, will play $\Move$ at states $\{2,4, \cdots, m-1 \}$ but play $\Stay$ at states $\{3, 5,7, \cdots, m\}$. This cycle continues, with $\pi_3=\pi_1$, $\pi_4=\pi_2$, and so on. The performance of these policies is depicted in Figure \ref{fig:simulation2}.

\begin{figure}
	\includegraphics[width=.6\linewidth, center]{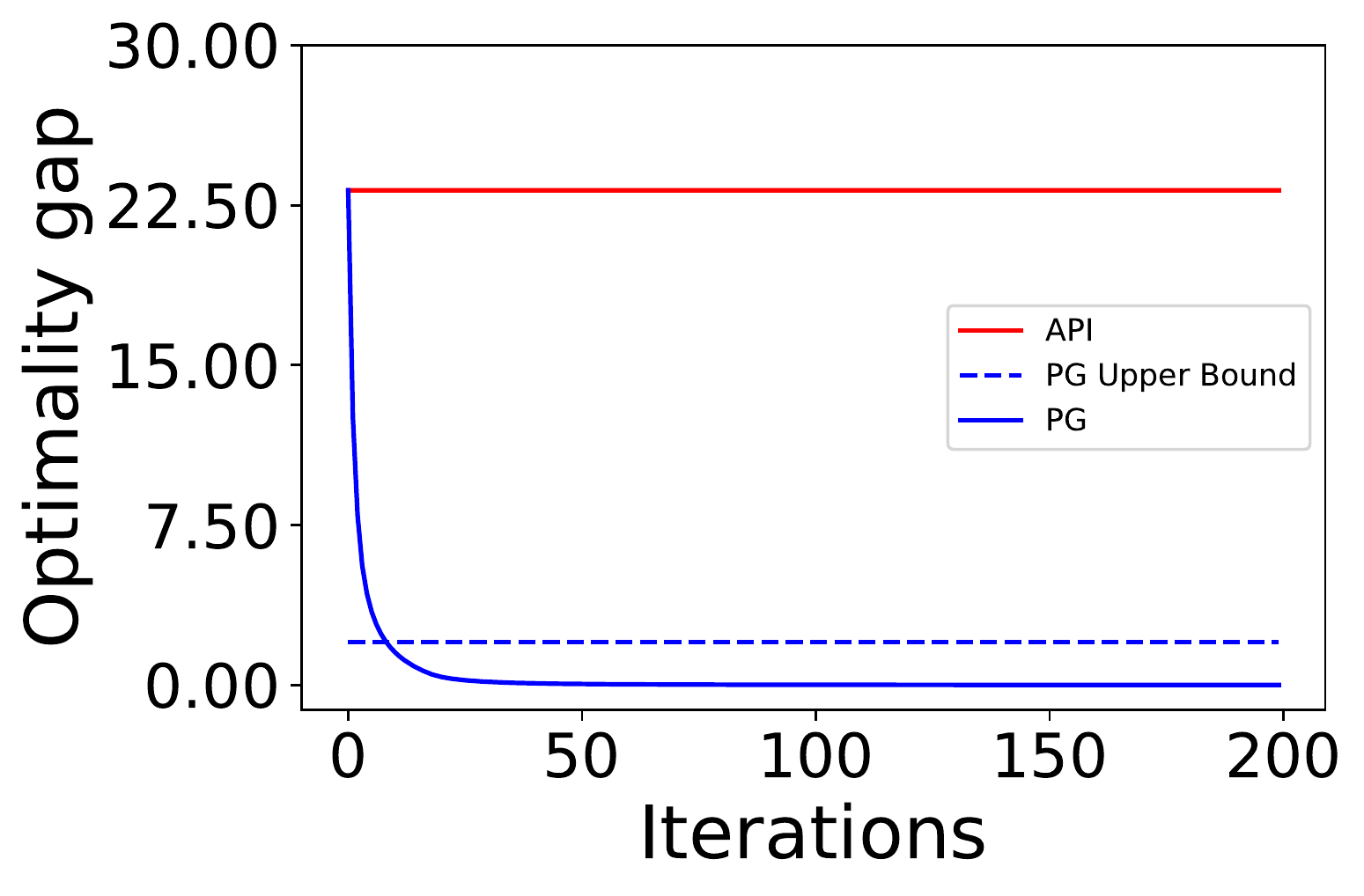}
	\caption{Performance of Algorithm \ref{alg: api-adaptive} and policy gradient in Example \ref{counterexample2}, with $n=400$ sates, discount factor $\gamma=0.99$, $\epsilon_{\phi}=1$, and $c=1/3$.  The initial distribution has the form $\rho(s)=C\cdot 20s$ and $\rho(s+m)=C\cdot s$ where $s\in \{1,\cdots, m\}$ and $C$ is a normalizing constant.  }
	\label{fig:simulation2}
\end{figure}

What drives this? Consider the policy $\pi_1$ described above and some $s\in \{3,5,7,\cdots\}$, so $\pi_1(s)=\Move$. Policy iteration considers the value of deviating from the prescribed action of $\pi_1$ for \emph{only a single period}. So the agent is essentially comparing (A) picking  $\Move$  and then continually selecting $\Stay$ at states $\{s-1, s-1+m\}$ and (B) picking $\Stay$ and transitioning to state $s+m$, then moving to $s-1$ and selecting $\Stay$ thereafter. Because the initial distribution is much more likely to place the agent at $s$ (top of Fig.~\ref{fig: counterexample_2}) than at $s+m$ (bottom of Fig.~\ref{fig: counterexample_2}) and agent plays $\Move$, API with on-policy state-weighting essentially ignores behavior at $s+m$ when fitting an approximate $Q$-function. The problem is constructed so that there is a higher reward to playing $\Stay$ at state $s$ than at $s+m$, and this is enough to cause the agent to estimate that (B) is preferable to (A). The reverse of this logic plays out for states $s\in \{2,4,6,\cdots\}$ with $\pi_1(s)=\Stay$. This argument is validated numerically. An analytical proof is likely similar to the argument establishing Theorem \ref{thm:api_lower} and is omitted.

\section{What drives the performance gap? The importance of on-policy state-weighting and incremental updates.} \label{sec:actor_critic}

Policy gradient methods are intimately related to API, making the performance difference between them all the more striking. To make their connections clear, we establish in Theorem \ref{thm:pg-is-api} a precise equivalence between two algorithms: one is a Frank-Wolfe \citep{frank1956algorithm, jaggi2013revisiting} variant of policy gradient and the other is a form of API which uses online state-relevance weights and soft policy updates. After giving a short proof, we turn to discussion the insights this equivalence yields. 
%
%
%

\begin{thm}\label{thm:pg-is-api}
	Suppose Algorithms \ref{alg: fw-pg} and \ref{alg: soft-api} are applied with the same inputs and the optimization problems in step (2) of Algorithm \ref{alg: fw-pg} and step (3) of Algorithm \ref{alg: soft-api} always have unique solutions. Then each algorithm produces an identical sequence of policies.   
\end{thm}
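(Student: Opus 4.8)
The plan is to argue by induction on the iteration index $t$ that the two algorithms produce the same policy, the crux being that the Frank-Wolfe linear-maximization direction computed in step (2) of Algorithm \ref{alg: fw-pg} coincides with the greedy policy computed in step (3) of Algorithm \ref{alg: soft-api}. Since the two algorithms share the same inputs (the same initial $\pi_1$, the same $\phi$, and the same stepsize schedule) and both form the next iterate as the identical convex combination $\pi_{t+1} = (1-\alpha_t)\pi_t + \alpha_t \bar\pi_t$ of the current iterate with their respective search directions $\bar\pi_t$, it suffices to show that these directions agree whenever the current iterates agree. The base case holds by the common initialization, so the entire argument reduces to one identity about the search directions at a shared iterate $\pi_t$.

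For the inductive step, assume $\pi_t$ is common to both algorithms. The Frank-Wolfe direction solves $\bar\pi_t \in \argmax_{\pi' \in \Pi_\phi} \langle \nabla J(\pi_t), \pi' \rangle$. By the policy gradient theorem for directional derivatives (Lemma \ref{lem: pg thm}), $\langle \nabla J(\pi_t), \pi' - \pi_t \rangle = \langle Q_{\pi_t}, \pi' - \pi_t \rangle_{\eta_{\pi_t} \times 1}$, and since the terms involving $\pi_t$ are constant in the maximization over $\pi'$, the Frank-Wolfe subproblem is equivalent to $\argmax_{\pi' \in \Pi_\phi} \langle Q_{\pi_t}, \pi' \rangle_{\eta_{\pi_t} \times 1}$. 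The same on-policy weighting $\eta_{\pi_t}$ thus appears here as the one used in the evaluation step of Algorithm \ref{alg: soft-api}.

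The heart of the proof is to replace $Q_{\pi_t}$ in this objective by its weighted least-squares projection $\hat{Q}_t \in \argmin_{\hat{Q} \in \Qc_\phi} \| \hat{Q} - Q_{\pi_t} \|_{2, \eta_{\pi_t} \times 1}$, which is exactly the quantity computed by Algorithm \ref{alg: soft-api}. Regarding any $\pi' \in \Pi_\phi$ as a segment-constant array, it lies in $\Qc_\phi$, so the first-order optimality (normal-equation) condition for the weighted least-squares projection gives the orthogonality relation $\langle Q_{\pi_t} - \hat{Q}_t, \pi' \rangle_{\eta_{\pi_t} \times 1} = 0$, whence $\langle Q_{\pi_t}, \pi' \rangle_{\eta_{\pi_t} \times 1} = \langle \hat{Q}_t, \pi' \rangle_{\eta_{\pi_t} \times 1}$ for every $\pi' \in \Pi_\phi$. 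Therefore the Frank-Wolfe direction maximizes $\langle \hat{Q}_t, \pi' \rangle_{\eta_{\pi_t} \times 1}$ over $\Pi_\phi$. Because $\hat{Q}_t$ is constant on each segment, this objective decouples across the segments of $\phi$, and its maximizer places, within each segment $\phi^{-1}(i)$, all probability mass on $\argmax_{a\in\Ac} \hat{Q}_t(s,a)$ — precisely the greedy policy with respect to $\hat{Q}_t$ used by Algorithm \ref{alg: soft-api}. Invoking the assumed uniqueness of both subproblems' solutions, the two directions coincide, hence so do the updates, which closes the induction.

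I expect the main obstacle to be the bookkeeping around the orthogonality step: one must legitimately regard a policy $\pi' \in \Pi_\phi$ as an element of the value-function space $\Qc_\phi$ so that the least-squares normal equations apply, and confirm that the weighting entering the Frank-Wolfe objective through Lemma \ref{lem: pg thm} is the same $\eta_{\pi_t}$ used in Algorithm \ref{alg: soft-api}'s projection. A secondary point to verify is that $\eta_{\pi_t}(\phi^{-1}(i)) > 0$ for each $i$ (so the per-segment maximization is well posed and the greedy policy indeed lies in $\Pi_\phi$); this, together with the stated uniqueness, removes the tie-breaking ambiguities that could otherwise separate the two sequences.
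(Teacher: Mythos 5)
Your proof is correct and follows essentially the same route as the paper: your orthogonality (normal-equation) step is exactly the paper's compatible function approximation lemma (Lemma \ref{lem: compatible approx}, whose proof uses that $\Qc_{\phi} = {\rm Span}(\Pi_{\phi})$ so the weighted least-squares residual is orthogonal to every $\pi' \in \Pi_{\phi}$), combined with Lemma \ref{lem: pg thm}. Your explicit induction on $t$, the segment-wise decoupling of the maximization, and the observation about $\eta_{\pi_t}(\phi^{-1}(i))>0$ merely spell out details the paper's terse corollary-style proof leaves implicit.
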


\begin{minipage}{0.52\textwidth}
	\begin{algorithm}[H]
		\caption{Frank-Wolfe Policy Gradient}
		\label{alg: fw-pg}
		\SetNlSty{texttt}{(}{)}
		\SetAlgoLined
		\SetKwInOut{Input}{input}\SetKwInOut{Output}{output}
		\Input{$\alpha, \pi_1\in \Pi$, $\phi$}
		\BlankLine
		\For{$t=1,2, \cdots, $}{ 
			\tcc{Maximize linearization}
			$\tilde{\pi}_{t+1} = \argmax_{\pi \in \Pi_{\phi}} \langle \nabla J(\pi_t) \, , \,  \pi-\pi_t \rangle $ \;
			\tcc{Soft policy update}
			$\pi_{t+1} = \alpha \tilde{\pi}_{t+1} + (1-\alpha) \pi_t$\;  
		}
	\end{algorithm}\DecMargin{1em}
	
\end{minipage}
\hfill
\begin{minipage}{0.48\textwidth}
	\hrulefill

	\begin{algorithm}[H]
		\caption{Soft API with adaptive weighting}
		\label{alg: soft-api}
		\SetNlSty{texttt}{(}{)}
		\SetAlgoLined
		\SetKwInOut{Input}{input}\SetKwInOut{Output}{output}
		\Input{$\alpha,\pi_1\in \Pi$, $\phi$}
		\BlankLine
		\For{$t=1,2, \cdots, $}{ 
			\tcc{Occupancy weighted policy evaluation }
			$\hat{Q}_t \in \argmin_{\hat{Q} \in \Qc_{\phi}} \| \hat{Q}  - Q_{\pi_t} \|_{2,\eta_{\pi_t}\times 1}$ \;
			\tcc{Policy improvement}
			$\tilde{\pi}_{t+1}(s) \in \argmax_{a\in \Ac} \hat{Q}_{t}(s,a) \, \forall s$\; 
			\tcc{Soft policy update}
			$\pi_{t+1} = \alpha \tilde{\pi}_{t+1} + (1-\alpha) \pi_t$\;  
		}
	\end{algorithm}\DecMargin{1em}	
\end{minipage}

Theorem \ref{thm:pg-is-api} appears to be new, but related results have appeared several times in the literature. Algorithm \ref{alg: api-adaptive} is often called \emph{conservative policy iteration} and was first proposed by \cite{kakade2002approximately} based on considerations similar to policy gradient methods. In cases without approximation, \cite{vieillard2019connections} and \cite{bhandari2021linear} observed that the Frank-Wolfe algorithm is equivalent to Algorithm \ref{alg: soft-api}. \cite{o2017combining} and \cite{schulman2017equivalence} study a related equivalence when entropy regularization is applied.
	

\subsection{Proof of Theorem \ref{thm:pg-is-api} using   the actor-critic theorem}
To compare policy gradient and API, we rely on the theory of actor-critic methods, which use estimated value functions in evaluating gradients of $J(\cdot)$. To make this precise, recall the policy gradient expression in Lemma \ref{lem: pg thm} expresses directional derivatives as a certain weighted inner product, 
$\langle \nabla J(\pi), \pi' -\pi\rangle = \langle Q_{\pi}\, , \, \pi'-\pi \rangle_{\eta_{\pi} \times 1}$.
Actor critic methods replace the true value function $Q_{\pi}$ with some parametrized approximation, producing an approximate gradient. 

An extremely elegant result of \cite{konda2000actor} and \cite{sutton2000policy} shows that compatible value function approximation produces \emph{no error} in evaluating the gradient in feasible ascent directions. Below we identify the form of compatible function approximation in our setting. As in the previous section, $ \| Q  \|_{2, {\eta_{\pi} \times 1}}$ denotes the norm induced by the inner product $ \langle\cdot\, ,\, \cdot \rangle_{\eta_{\pi} \times 1}$ defined in \eqref{eq:weighted_inner_product}.  
\begin{lemma}[Compatible function approximation]\label{lem: compatible approx}
	If $\hat{Q}_{\pi} = \argmin_{\hat{Q} \in \Qc_{\phi}} \| \hat{Q} - Q_{\pi}  \|_{2, {\eta_{\pi} \times 1}} $, then, 
	\[
	\langle \nabla J(\pi),\, \pi' -\pi\rangle =  \langle \hat{Q}_{\pi}\, , \, \pi'-\pi \rangle_{\eta_{\pi} \times 1} \,\, \forall \pi' \in \Pi_{\phi}. 
	\]
\end{lemma}
\begin{proof}
	Observe that $\Qc_{\phi} = {\rm Span}\left( \Pi_{\phi} \right)$, where ${\rm Span}\left( \Pi_{\phi} \right)$ consists of all vectors of the form $\sum_{i=1}^{I} c_i \pi^{(i)} $ where each $c_i \in \mathbb{R}$ is a scalar and $\pi^{(i)} \in \Pi_{\phi}$. Then, $\hat{Q}_{\pi}$ is the orthogonal projection of $Q_{\pi}$ onto ${\rm Span}\left( \Pi_{\phi} \right)$ with respect to the norm induced by the inner product $\langle \cdot\, , \, \cdot \rangle_{2, \eta_{\pi}\times 1}$. This means the error vector $Q_{\pi} - \hat{Q}_{\pi}$ will be orthogonal to the subspace ${\rm Span}\left( \Pi_{\phi} \right)$ with respect to  $\langle \cdot\, , \, \cdot \rangle_{2, \eta_{\pi}\times 1}$, implying 
	\[ 
	\langle Q_{\pi}\, , \, \tilde{\pi} \rangle_{\eta_{\pi} \times 1}  = 	\langle \hat{Q}_{\pi}\, , \, \tilde{\pi} \rangle_{\eta_{\pi} \times 1}   \quad \forall  \tilde{\pi} \in {\rm Span}\left(\Pi_{\phi}\right).
	\] 
	Combined with Lemma \ref{lem: pg thm}, this yields the result. 
\end{proof}

Theorem \ref{thm:pg-is-api} is a simple corollary of Lemma \ref{lem: compatible approx}. By Lemma \ref{lem: compatible approx}, Step 2 of Algorithm \ref{alg: fw-pg} is equivalent to $$\tilde{\pi}_{t+1} \in \argmax_{\pi \in \Pi_{\phi}} \, \langle \nabla J(\pi_t) \, , \,  \pi-\pi_t \rangle = \argmax_{\pi \in \Pi_{\phi}} \, \langle \hat{Q}_t \, , \,  \pi-\pi_t \rangle_{\eta_{\pi_t} \times 1},$$
	where $\hat{Q}_t$ has the same definition as in Algorithm \ref{alg: soft-api}.

\subsection{Discussion}
In light of Example \ref{counterexample}, Theorem \ref{thm:pg-is-api} reveals that two changes to API together have an enormous impact. One is to use on-policy state-relevance weights in the choice of loss function that is minimized to select an approximate value function. Lemma \ref{lem: compatible approx} shows that using this estimation loss orients estimation toward accurate evaluation of the decision objective, at least locally. A poor choice of state-weighting seems to have contributed to the poor performance of Algorithm \ref{alg: api} in Example \ref{counterexample}. The ``top states'' $(m+1, \cdots, 2m)$, displayed shaded in the top half of Figure \ref{fig: counterexample}, were given the same weight as the ``bottom states.'' With on-policy state-relevance weights, little weight is given to the top states, as they are visited infrequently. This would allow for an accurate representation at the more important bottom states, dampening the propagating errors from state-aggregation that were shown in the proof of Theorem \ref{thm:api_lower}. \cite{van2006performance} previously observed that the robustness of policies derived from solutions of state-aggregated Bellman equations can depend critically on the choice of state-relevance weights. It is an open question whether his theory can be connected formally to the analysis in this paper.
	
The other change to API is to use soft, or local, changes to the policy. In the $t^{\rm th}$ iteration of Algorithm \ref{alg: soft-api}, the state-relevance weights $\eta_{\pi_t}$ capture relevance under the policy $\pi_t$ by design.  But if the stepsize is large, they may no longer reflect the relevance of states under the policy $\pi_{t+1}$ over which the algorithm is optimizing. Equation \ref{eq:pg_thm_dist_shift} shows that the second order error term, $J(\pi_{t+1})-J(\pi_t) - \langle \nabla J(\pi_t)\,, \, \pi_{t+1}-\pi_t\rangle$, depends on the magnitude of distribution shift, $\eta_{\pi_{t+1}} - \eta_{\pi_t}$. Example \ref{counterexample2} shows that this issue can severely impact decision quality. In that example, the relevance of ``top states'' changes substantially across iterations, and this causes Algorithm \ref{alg: api-adaptive} to cycle endlessly between policies with poor performance. Other works in the literature focus on ensuring policy changes are small enough that performance improves strictly in each iteration \citep{kakade2002approximately,schulman2015trust}, but I am not aware of any results that show such severe degradation in performance is possible otherwise.

\section{Conclusion}
  A surge of recent papers on the theory of reinforcement learning have established convergence rates and sample complexity bounds for different algorithms. Few, however, have elucidated the subtle impact of algorithmic design choices on robustness to approximation errors. This paper has provided one such case study, focused on the comparison between approximate policy iteration and policy gradient when applied with state-aggregated representations. The main contribution is providing a short self-contained treatment with a transparent gap between provable upper and lower bounds.  
  
  One open question, highlighted in Section \ref{sec:upper_bounds} is whether the notion of approximation error in Definition \ref{def:error} can be relaxed to depend only on the optimal value function, as shown for optimistic Q-learning in \cite{dong2019provably}. For simplicity and brevity, this paper has focused on the quality of stationary points and, at times, on the simple projected policy gradient method. Convergence rates for policy gradient methods are given for example in \cite{agarwal2019optimality, bhandari2019global,shani2020adaptive}, and it seems that similar finite time bounds could be developed here. 
  
  Another open direction is to generalize these result beyond the case of state-aggregation.  
  The critical feature of state-aggregated representations is that they can be adjusted locally without impacting the approximation in other regions of the state space. By contrast, in general linear models can be quite rigid, with local changes influencing the approximation at distant states. This risk seems to drive a dependence of past guarantees for policy gradient methods on certain distribution shift terms which were avoided in Theorem \ref{thm: main result}, like the $\kappa_{\rho}$ term in Theorem $\ref{thm:old_result}$. Indirectly, poor dependence on the problem's time horizon was avoided for the same reason\footnote{Past work, like \cite{bhandari2019global,agarwal2019optimality} uses the initial distribution $\rho$ to control likelihood ratio terms as $\eta_{\pi^*}(s)/\eta_{\pi}(s) \geq (1-\gamma)\left(\eta_{\pi^*}(s)/\rho(s)\right)$. Avoiding a dependence of limiting approximation error on distribution shift terms avoided an extra dependence on the effective time horizon, $(1-\gamma)^{-1}$. }.  The deep neural representations that are popular today seem to have elements of both state-aggregation and global linear approximation. A crisp understanding of local methods like state-aggregation may provide some useful intuition for the study of neural networks.






{\small
	\setlength{\bibsep}{2pt plus 0.4ex}
	\bibliography{references}
	\bibliographystyle{abbrvnat}
}

\appendix

\section{Implementing projected policy gradient with aggregated state approximations}

Conceptually, the simplest policy gradient method is the projected gradient ascent iteration
\begin{align*}
	\pi^{t+1} &= {\rm Proj}_{2, \Pi_{\phi} }\left(\pi^{t} + \alpha \nabla J(\pi^{t})\right) = \argmax_{\pi \in \Pi_{\phi}} \left(\pi^{t} + \langle \nabla J(\pi^{t})\, , \,  \pi-\pi^t  \rangle  - \frac{1}{2\alpha} \left\| \pi-\pi^t \right\|^2_2  \right) \quad t\in \mathbb{N},
\end{align*}
where any policy $\pi \in \Pi$ is viewed as a stacked $|\Sc|\cdot |\Ac|$ dimensional vector satisfying $\sum_{a\in \Ac} \pi(s,a) =1$ and $\pi \geq 0$. The operator ${\rm Proj}_{2, \Pi_{\phi} }(\pi) = \argmin_{\pi' \in \Pi_{\phi}} \|\pi' - \pi\|_{2}^2$ denotes orthogonal projection onto the convex set $\Pi_{\phi}$ with respect to the Euclidean norm. The second equality is a well known ``proximal'' interpretation of the projected update \citep{beck2017first}. Although the optimization problem 
\[
\argmax_{\pi \in \Pi_{\phi}} \left(\pi^{t} + \langle \nabla J(\pi^{t})\, , \,  \pi-\pi^t  \rangle  - \frac{1}{2\alpha} \left\| \pi-\pi^t \right\|^2_2  \right)
\]
appears to involve $|\Sc| \cdot |\Ac|$ decision variables, it is equivalent to one involving $m \cdot |\Ac|$ decision variables. 

Algorithm \ref{alg:projected_pg} below uses $\theta \in \mathbb{R}^{m \times |\Ac|}$ to denote a the parameter of a state-aggregated policy, where $\pi_{\infty}(s, a)= \theta_{i,a}$ is the probability of selecting action $a$ for a state $s\in \phi^{-1}(i)$ in segment $i$.   The projection has a simple solution, involving projecting the vector $\tilde{\theta}_{s,:}$ corresponding to  partition  $i$ onto the space of action distributions $\Delta(\Ac)$. Projection onto the simplex can be executed with a simple soft thresholding procedure. In particular, the projection $\hat{y}\in \Delta(\Ac)$ of a vector $y\in \mathbb{R}^{|\Ac|}$ satisfies $\hat{y}_i = \max\{y_i -\beta, 0  \}$ where $\beta$ is chosen so that $\sum_i \hat{y}_i=1$. The pseudocode in Algorithm \ref{algo:projection}, taken from  \citep{duchi2008efficient}, shows how $\beta$ can be found efficiently. The algorithm runs in $O(|\Ac|\log( |\Ac|))$ time, with the bottleneck being the sorting of the vector $y$. \cite{duchi2008efficient} shows how this can be reduced to an $O( |\Ac|)$ runtime. 

\begin{algorithm}[H]\label{alg:projected_pg}
	\caption{Projected Policy Gradient}
	\SetNlSty{texttt}{(}{)}
	\SetAlgoLined
	\SetKwInOut{Input}{input}\SetKwInOut{Output}{output}
	\Input{$\theta\in \reals^{m\times |\Ac|}$, stepsize $\alpha$}
	\BlankLine
	\For{$t=1,2, \cdots, $}{ 
		Get gradient $g = \nabla_{\theta} J(\pi_{\theta})$\;
		Form target $\tilde{\theta} = \theta + \alpha \theta$\; 
		\tcc{Project onto simplex}
		\For{$i=1, \cdots, m$}{ 
			$
			\theta_{i,:} \leftarrow \min_{d\in \Delta(\Ac) } \| d- \tilde{\theta}_{i,:}\|_{2}^2
			$
		}
	}
\end{algorithm}\DecMargin{1em}
\begin{algorithm}[H]
	\caption{Projection onto the probability simplex}\label{algo:projection} 
	\SetNlSty{texttt}{(}{)}
	\SetAlgoLined
	\SetKwInOut{Input}{input}\SetKwInOut{Output}{output}
	\Input{Vector $y\in \mathbb{R}^k$}
	\BlankLine
		Sort $y$ into $\mu$ with $\mu_1 \geq \mu_2 \geq \cdots \geq \mu_k$\; 
		Find  $J= \max\left\{j \in [k] : \mu_j - \frac{1}{j}\left( \sum_{r=1}^{j} \mu_r  -1 \right)   \right\} >0$ \;
		Define $\beta = \frac{1}{J}\left( \sum_{i=1}^{J}  \mu_i -1 \right)$ \;
		\Output{$\hat{y}\in \mathbb{R}^k$ where $\hat{y}_i = \max\{y_i - \beta, 0  \}$ }
\end{algorithm}\DecMargin{1em}

Algorithm \ref{alg: stochastic gradient} provides an unbiased monte-carlo policy gradient estimator. It is based on the formula 
\[
\frac{\partial J(\pi)}{\partial \pi(s,a)}  =   Q_{\pi}(s,a) \eta_{\pi}(s),
\]
which can be derived from the standard policy gradient theorem by picking a direct policy parameterization \citep[see e.g.][]{agarwal2019optimality}.  Rewriting this, if $\tilde{s}_0 \sim \eta_{\pi}$ and $\tilde{a}_0| \tilde{s_0} \sim {\rm Uniform}(1, \cdots,k)$, then 
\[ 
\frac{\partial J(\pi)}{\partial \pi(s,a)}  = Q_{\pi}(s,a) \Prob(\tilde{s}_0=s ) = |\Ac| Q_{\pi}(s,a) \Prob(\tilde{s}_0=s, \tilde{a}_0=a ). 
\]
Using the chain rule, we have 
\[ 
\frac{\partial J(\pi_{\theta})}{\partial \theta_{i,a}} = \sum_{s\in \phi^{-1}(i)} \frac{\partial  J(\pi) }{\partial \pi(s,a)} =  |\Ac| \E\left[ Q_{\pi}(\tilde{s}_0,a) \ind(\tilde{s}_0 \in \phi^{-1}(i), \tilde{a}_0=a ) \right]. 
\]

Algorithm \ref{algo:stochastic_grad} shows how this formula can be used together with a simulator of the environment to generate stochastic gradient $\hat{g}$ with $\E\left[ \hat{g} \right] = \nabla_{\theta} J(\pi_\theta)$. This could be used in stochastic gradient schemes or, by averaging across many independent simulation runs, used to estimate $\nabla_{\theta} J(\pi_{\theta})$ accurately.  The algorithm begins by drawing a state $\tilde{s}_0$ from $\eta_{\pi}(\cdot)$ and then an action $\tilde{a}_0$ uniformly at random (See Remark \ref{rem:uar} ). Then it uses a Monte Carlo rollout to estimate $Q_{\pi}(\tilde{s}_0,\tilde{a}_0)$. To give unbiased estimates of infinite horizon discounted sums underlying $\eta_{\pi}$ and $Q_{\pi}$, it leverages a well know equivalence between geometric discounting and the use a random geometric horizon. For any scalar random variables $\{X_{t}\}_{t=0,1,\cdots}$, one has 
\[ 
\E\left[ \sum_{t=0}^{\infty} \gamma^t X_t \right] = \E\left[ \sum_{t=0}^{\tau} X_t \right] 
\]
where $\tau\sim {\rm Geometric}(1-\gamma)$ has distribution  is independent of $\{X_t\}$. The equivalence is due to the fact that $\Prob(\tau \geq t)=\gamma^t$. 

\begin{algorithm}[H]\label{algo:stochastic_grad}
	\SetNlSty{texttt}{(}{)}
	\SetAlgoLined
	\SetKwInOut{Input}{input}\SetKwInOut{Output}{output}
	\Input{$H$, $S$, $A$, tuning parameters $\{\beta_{k}\}_{k\in \mathbb{N}}$}
	\BlankLine
	\tcc{Sample $\tilde{s}_0 \sim \eta_{\pi}$}
	Sample $\tau \sim {\rm Geometric}(1-\gamma)$\;
	Sample initial $s_0\sim \rho$. \;
	Apply policy  $\pi$ for $\tau$ timesteps\;
	Observe $(s_0, a_0, r_0, \cdots , a_{\tau-1}, r_{\tau-1}, s_{\tau})$\;
	Set $\tilde{s}_0 = s_{\tau}$\;
	\tcc{Draw uniform random action}
	Sample $\tilde{a}_0 \sim {\rm Uniform}\{1,\cdots, |\Ac|\}$\;
	\tcc{Unbiased estimate of $Q_{\pi}(\tilde{a}_0, \tilde{s}_0 ) $}	
	Sample $\tilde{\tau} \sim {\rm Geometric}(1-\gamma)$\;
	Apply action $\tilde{a}_0$ and observe $(\tilde{r}_0, \tilde{s}_1)$ \;
	\If{$\tilde{\tau} >0$ }{
		Apply policy $\pi$ for $\tilde{\tau}$ periods from $\tilde{s}_1$\;
		Observe: $(\tilde{s}_1, \tilde{a}_1, \tilde{s}_2, \cdots, \tilde{a}_{\tau-1}, \tilde{r}_{\tau-1}, \tilde{s}_{\tau})$\; 
	}
	Set $\hat{Q}= \tilde{r}_0 + \cdots + \tilde{r}_\tau$\;
	Find state segment $I = \phi^{-1}(\tilde{s}_0)$ \; 
	Set $\hat{g}(i, a) = \begin{cases} |\Ac|\cdot \hat{Q} &\mbox{if } i=I, a=\tilde{a}_0  \\
		0 & \mbox{otherwise }  \end{cases} $\;
	\Output{ $\hat{g} \in \reals^{m\times  |\Ac|}$}
	\caption{Simple Unbiased Gradient \label{alg: stochastic gradient}}
\end{algorithm}\DecMargin{1em}

\begin{remark}\label{rem:uar}
	A more common presentation of unbiased policy gradient estimation uses a kind of inverse propensity estimate where $\tilde{a}_0$ is sampled from the policy being evaluated \citep{williams1992simple}. This can have very large variance if the policy is nearly deterministic. The form presented above ensures the variance of the sampled gradient is uniformly bounded. 
\end{remark}

\section{Some details on Remark \ref{rem:equivalence_of_representations}}\label{sec:policy_value_equivalence}
Equation \ref{eq:state-agg_policies_and_softmax} can be established as follows. If $Q \in \mathcal{Q}_{\phi}$ is state-aggregated, then it is immediate that $\pi$ defined by $\pi(s,a)  = e^{Q(s, a)}/ \sum_{a'\in \Ac} e^{Q(s, a')}$ is state-aggregated. This shows soft-maximization with respect to some state-aggregated value function yields a state-aggregated policy which assigns non-zero probability to each action. Now we need to show that every strictly stochastic state-aggregated policy can be generated this way. Consider any policy $\pi \in \Pi_{\phi}$ with $\pi(s,a)>0$ for all $a\in \Ac$. Picking $Q(s,a)=\log\pi(s,a)$, yields  $\pi(s,a)  = e^{Q(s, a)}/ \sum_{a'\in \Ac} e^{Q(s, a')}$.
	
Results for policies that assign zero probability to some action are attained by taking limits of strictly stochastic policies which approach them. The minimum in \eqref{eq:deterministic_policies} could be replaced with any deterministic tie-breaking mechanism. This is needed because if ties are broken differently at states sharing common segment, the induced policy would not be constant across segments.

\end{document}